\newcommand{\ibf}{\mathbf{i}}
\newcommand{\jbf}{\mathbf{j}}
\begin{document}

\newcommand{\inner}[2]{\ensuremath{\langle{#1},{#2}\rangle}}
\newcommand{\lmp}[2]{\ensuremath{\ell_{#2}^{#1}}}
\newcommand{\mapto}{\ensuremath{\rightarrow}}
\newcommand{\nmapto}{\ensuremath{\nrightarrow}}
\newcommand{\approach}{\ensuremath{\rightarrow}}
\newcommand{\imply}{\ensuremath{\Rightarrow}}
\newcommand{\inject}{\ensuremath{\hookrightarrow}}
\newcommand{\equivalent}{\ensuremath{\Longleftrightarrow}}
\newcommand{\inclusion}{\ensuremath{\hookrightarrow}}

\newcommand{\ol}[1]{\ensuremath{\overline{#1}}}

\newtheorem{observation}{Observation}
\newtheorem{hypothesis}{Hypothesis}
\newtheorem{notation}{Notation}

\newcommand{\A}{\mathcal{A}}
\newcommand{\B}{\mathcal{B}}
\newcommand{\X}{\mathcal{X}}
\newcommand{\Y}{\mathcal{Y}}
\newcommand{\Z}{\mathbb{Z}}
\newcommand{\N}{\mathbb{N}}
\newcommand{\R}{\mathbb{R}}

\newcommand{\G}{\mathbf{G}}
\newcommand{\D}{\mathbf{D}}
\newcommand{\bfC}{\mathbf{C}}
\newcommand{\bfW}{\mathbf{W}}
\newcommand{\bfS}{\mathbf{S}}
\newcommand{\bfD}{\mathbf{D}}
\newcommand{\bfH}{\mathbf{H}}
\newcommand{\bfA}{\mathbf{A}}
\newcommand{\bfG}{\mathbf{G}}
\newcommand{\bfB}{\mathbf{B}}
\newcommand{\bfJ}{\mathbf{J}}

\newcommand{\bbC}{\mathbb{C}}

\newcommand{\bP}{\mathbb{P}}
\newcommand{\bE}{\mathbb{E}}
\newcommand{\bC}{\mathbb{C}}

\newcommand{\Ecal}{\mathcal{E}}
\newcommand{\Fcal}{\mathcal{F}}

\newcommand{\C}{\mathcal{C}}
\newcommand{\Dcal}{\mathcal{D}}

\newcommand{\gfrak}{\mathfrak{g}}

\newcommand{\T}{\mathcal{T}}
\renewcommand{\O}{\mathcal{O}}

\newcommand{\Q}{\mathcal{Q}}

\newcommand{\M}{\mathcal{M}}
\renewcommand{\P}{\mathcal{P}}
\renewcommand{\S}{\mathcal{S}}

\newcommand{\la}{\langle}
\newcommand{\ra}{\rangle}
\newcommand{\F}{\mathcal{F}}
\renewcommand{\H}{\mathcal{H}}
\renewcommand{\L}{\mathcal{L}}

\newcommand{\Mcal}{\mathcal{M}}

\newcommand{\Fre}{Fr\'echet \;}
\newcommand{\Ga}{G\^ateaux \;}

\newcommand\conv{{\rm conv}}

\def\diag{{\rm diag}}
\def\diam{{\rm diam}}
\def\rank{{\rm rank}}
\def\cond{{\rm cond}}

\def\dist{{\rm dist}}
\def\best{{\rm best}}

\def\supp{{\rm supp}}
\def\sinc{{\rm sinc}}

\def\argmin{{\rm argmin}}

\def\argmax{{\rm argmax}}
\def \sgn{{\rm sgn}}

\def \range{{\rm range}}
\def \reg{{\rm reg}}

\def\Im{{\rm Im}}

\def\proj{{\rm proj}}
\def\cov{{\rm cov}}

\def\trace{{\rm tr}}
\def\loc{{\rm loc}}
\def\vec{{\rm vec}}
\def\nullspace{{\rm nullspace}}
\def\colspace{{\rm colspace}}
\def\rowspace{{\rm rowspace}}

\def\rvec{\widetilde{\rm vec}}

\def\curl{{\rm curl}}
\def\div{{\rm div}}
\def\har{{\rm har}}
\def\corr{{\rm corr}}

\def\sym{{\rm sym}}
\def\rw{{\rm rw}}
\def\target{{\rm target}}
\def\pos{{\rm pos}}
\def\neg{{\rm neg}}

\def\HS{{\rm HS}}
\def\eHS{{\rm eHS}}

\def\Log{{\rm Log}}
\def\Exp{{\rm Exp}}

\def\opt{{\rm opt}}
\def\vec{{\rm vec}}

\def\Sym{{\rm Sym}}
\def\logHS{{\rm logHS}}
\def\elogHS{{\rm elogHS}}

\def\logE{{\rm logE}}
\def\eTr{{\rm eTr}}
\def\etr{{\rm tr_X}}
\def\tr{{\rm tr}}
\def\Tmath{{\rm T}}
\def\Tr{{\rm Tr}}

\def\length{{\rm length}}
\def\p{{\mathbf p}}

\def\Xfrak{{\mathfrak X}}

\def\grad{{\nabla}}

\def\const{{\rm constant}}

\def\sym{{\rm sym}}
\def\skew{{\rm skew}}

\newcommand{\U}{\mathcal{U}}

\newcommand{\h}{\mathbf{h}}
\newcommand{\x}{\mathbf{x}}
\newcommand{\s}{\mathbf{s}}
\newcommand{\w}{\mathbf{w}}
\newcommand{\z}{\mathbf{z}}
\renewcommand{\a}{\mathbf{a}}
\renewcommand{\c}{\mathbf{c}}
\renewcommand{\v}{\mathbf{v}}
\newcommand{\e}{\mathbf{e}}
\newcommand{\n}{\mathbf{n}}
\renewcommand{\b}{\mathbf{b}}

\newcommand{\y}{\mathbf{y}}
\newcommand{\f}{\mathbf{f}}

\newcommand{\W}{\mathcal{W}}

\newcommand{\bX}{\mathbf{X}}
\newcommand{\bW}{\mathbf{W}}
\newcommand{\bF}{\mathbf{F}}
\newcommand{\bb}{\mathbf{b}}

\newcommand{\bt}{\mathbf{t}}

\newcommand{\vecpi}{\overrightarrow{\pi}}
\newcommand{\veci}{\overrightarrow{i}}
\newcommand{\vecj}{\overrightarrow{j}}
\newcommand{\veck}{\overrightarrow{k}}

\def\1{\mathbf{1}}

\renewcommand{\u}{\mathbf{u}}

\newcommand{\gAI}{{\rm gAI}}

\newcommand{\Ncal}{\mathcal{N}}

\newcommand{\stein}{{\rm stein}}

\newcommand{\Pcal}{\mathcal{P}}
\newcommand{\Pb}{\mathbb{P}}

\newcommand{\logdet}{{\rm logdet}}

\newcommand{\jeff}{{\rm jeff}}

\newcommand{\detX}{{\rm det_X}}
\newcommand{\logdetX}{{\rm logdet_X}}

\newcommand{\Lcal}{\mathcal{L}}

\newcommand{\PT}{\mathbb{PT}}

\newcommand{\Bcal}{\mathcal{B}}

\newcommand{\mb}{\mathbf{m}}

\newcommand{\norm}{{\rm norm}}

\title{Operator-Valued Bochner Theorem, Fourier Feature Maps for Operator-Valued Kernels, and Vector-Valued Learning}

\author{\name  H\`a Quang Minh  \email  minh.haquang@iit.it\\
        \addr Pattern Analysis and Computer Vision (PAVIS)\\
   			 Istituto Italiano di Tecnologia (IIT), Via Morego 30, Genova 16163, ITALY}

\editor{}

\maketitle

\begin{abstract}
This paper presents a framework for computing random operator-valued feature maps for operator-valued positive definite kernels.
This is a generalization of the random Fourier features for scalar-valued kernels to the operator-valued case. 
Our general setting is that of operator-valued kernels corresponding to RKHS of functions with values in a Hilbert space.
We show that in general, for a given kernel, there are potentially infinitely many random feature maps, which can be bounded or unbounded.
Most importantly, given a kernel, we present a general, closed form formula for computing a corresponding probability measure, which is required for the construction of the Fourier features, and which, unlike the scalar case, is not uniquely and automatically determined by the kernel. We also show that, under appropriate conditions, random bounded feature maps can always be computed.
Furthermore, we show the uniform convergence, under the Hilbert-Schmidt norm, of the resulting approximate kernel to the exact kernel 
on any compact subset of Euclidean space. Our convergence requires differentiable kernels, an improvement over the twice-differentiability requirement in previous work in the scalar setting. We then show how operator-valued feature maps and their approximations can be employed in a general vector-valued learning framework.
The mathematical formulation is illustrated by numerical examples on matrix-valued kernels.
\end{abstract}

\section{Introduction}

The current work is concerned with the construction of random feature maps for operator-valued kernels and their applications in vector-valued learning. Much work has been done in machine learning recently on these kernels and their associated RKHS of vector-valued functions, both theoretically and practically,
see e.g. 
\citep{MichelliPontil05, Carmeli2006, Reisert2007, Caponnetto08, ICML2011Brouard, ICML2011Dinuzzo,Kadrietal2011, MinhVikasICML2011, Zhangetal:JMLR2012, VikasMinhLozano:UAI2013}. While rich in theory and potentially powerful in applications, one of the main challenges in applying operator-valued kernels is that they are computationally intensive on large datasets. 
In the scalar setting, one of the most powerful approaches for scaling up kernel methods is Random Fourier Features \citep{Fourier:NIPS2007}, which applies Bochner's Theorem and the Inverse Fourier Transform to build random features that approximate a given shift-invariant kernel. The approach in \citep{Fourier:NIPS2007} has been improved both in terms of  computational speed \citep{Fastfood:ICML2013} 
and rates of convergence \citep{Fourier:UAI2015,Fourier:NIPS2015}. 

{\bf Our contributions}. The following are the contributions of this work.
\begin{enumerate}
\item {\it Firstly}, we construct random feature maps for operator-valued shift-invariant kernels using the operator-valued version of Bochner's Theorem. The {\it key differences} between the operator-valued and scalar settings are the following. The first key difference is that, in the scalar setting, a positive definite function $k$, with normalization, is the Fourier transform of a probability measure $\rho$, which is {\it uniquely determined} as the inverse Fourier transform of $k$. In the operator-valued setting, $k$ is the Fourier transform of a {\it unique} finite positive operator-valued measure $\mu$. However, the probability measure $\rho$, which is necessary
for constructing the random feature maps, must be {\it explicitly constructed}, that is it is {\it not} automatically determined by $k$. 
In this work, we present a general formula for computing a probability measure $\rho$ given a kernel $k$.
The second key difference is that, in the operator-valued setting, the probability measure $\rho$ is generally {\it non-unique}, being a factor of $\mu$. As a consequence, we show
that in general, there are (potentially infinitely) many random feature maps, which may be either unbounded or bounded. However, under appropriate assumptions, we show that there always exist bounded feature maps. This is true for many of the commonly encountered kernels, including separable kernels and curl-free and divergence-free kernels. 

\item {\it Secondly}, for the bounded feature maps, we show that the associated approximate kernel converges uniformly to the exact kernel in Hilbert-Schmidt norm on any compact subset in Euclidean space.

\item {\it Thirdly}, when restricting to the scalar setting, our convergence holds for differentiable kernels, which is an improvement 
over the hypothesis of \citep{Fourier:NIPS2007,Fourier:UAI2015,Fourier:NIPS2015,Romain:2016}, which all require the kernels to be twice-differentiable. 

\item {\it Fourthly}, we show how operator-valued feature maps and their approximations can be used directly in a general learning formulation in RKHS.

\end{enumerate}

{\bf Related work}. The work most closely related to our present work is \citep{Romain:2016}. While the {\it formal constructions} of the Fourier feature maps in \citep{Romain:2016} and our work are similar, there are several {\it crucial differences}. 
The first and most important difference is that in \citep{Romain:2016} there is {no} general mechanism for computing a probability measure $\rho$, 
which is required for the construction of the Fourier feature maps. As such, the results presented in \citep{Romain:2016}
are only for three specific kernels, namely separable kernels, curl-free and div-free kernels, {not} for a general kernel as in our setting.
Moreover, for the curl-free and div-free kernels, \citep{Romain:2016} presented unbounded feature maps, whereas we show
that, apart from unbounded feature maps, there are generally infinitely many bounded feature maps associated with these kernels.
Secondly, more general than the matrix-valued kernel, i.e finite-dimensional, setting in \citep{Romain:2016}, we
work in the operator-valued kernel setting, with RKHS of functions with values in a Hilbert space. In this setting, the convergence
in the Hilbert-Schmidt norm that we present is {\it strictly stronger} than the convergence in spectral norm given in \citep{Romain:2016}.
At the same time, our convergence requires {\it weaker assumptions} than those in \citep{Romain:2016} and previous results in the scalar setting
\citep{Fourier:NIPS2007,Fourier:UAI2015,Fourier:NIPS2015}.

{\bf Organization}. 
We first briefly review random Fourier features and operator-valued kernels in Section~\ref{section:background}.
Feature maps for operator-valued kernels are described in Section~\ref{section:operator-feature}.
The core of the paper is Section~\ref{section:operator-random}, which describes
the construction of random feature maps using operator-valued Bochner's Theorem, the computation of the required probability measure, along with the uniform convergence of the corresponding approximate kernels. Section~\ref{section:learning} employs feature maps and their approximations in a general vector-valued learning formulation, with the accompanying experiments in Section~\ref{section:experiments}.
All mathematical proofs are given in Appendix \ref{section:proofs}.

\section{Background}
\label{section:background}
Throughout the paper, we work with shift-invariant positive definite kernels {$K$} on {$\R^n \times\R^n$},
so that {$K(x,t) = k(x-t) \forall x,t\in\R^n$} for some function {$k:\R^n \mapto \R$}, which is then said to be a {\it positive definite function} on $\R^n$.  

{\bf Random Fourier features for scalar-valued kernels \citep{Fourier:NIPS2007}}. Bochner's Theorem in the scalar setting, see e.g. \citep{ReedSimon:vol2}, states that 
a complex-valued, continuous function $k$ on $\R^n$ is positive definite if and only if it is the Fourier transform of a finite, positive measure $\mu$ on $\R^n$, that is
{
\begin{align}
\label{equation:Bochner-scalar}
k(x) = \hat{\mu}(x) = \int_{\R^n}e^{-i\la \omega, x\ra}d\mu(\omega).
\end{align}
}
For our purposes, we consider exclusively the {\it real-valued} setting for $k$.
Since $\mu$ is a finite positive measure, without loss of generality, we assume that $\mu$ is a probability measure, so that {$k(x) = \bE_{\mu}[e^{-i\la\omega, x\ra}]$}.
The measure $\mu$ is {\it uniquely determined} via $\hat{\mu} = k$.  
For the Gaussian function {$k(x) = e^{-\frac{||x||^2}{\sigma^2}}$}, we have 
{$\mu(\omega) = \frac{(\sigma\sqrt{\pi})^n}{(2\pi)^n}e^{-\frac{\sigma^2||\omega||^2}{4}}
\sim \Ncal\left(0, \frac{2}{\sigma^2}\right)$}.
Consider now the
kernel {$K(x,t) = k(x-t) = \int_{\R^n}e^{-i\la \omega, x-t\ra}d\mu(\omega)$}. Using the symmetry of {$K$} and the relation {$\frac{1}{2}(e^{ix} + e^{-ix}) = \cos(x)$}, we obtain
{
\begin{align}
K(x,t) = \frac{1}{2}\int_{\R^n}[e^{i\la \omega, x-t\ra} + e^{-i \la \omega, x-t\ra}]d\mu(\omega) = \int_{\R^n}\cos(\la \omega, x-t\ra)d\mu(\omega).
\end{align}
}
Let {$\{\omega_j\}_{j=1}^D$} be points in {$\R^n$}, independently sampled according to the measure {$\mu$}. Then we have an empirical approximation {$\hat{K}_D$} of {$K$} and the associated feature map {$\hat{\Phi}_D:\R^n \mapto \R^{2D}$}, as follows
{
\begin{align}
\hat{K}_D(x,t) &= \frac{1}{D}\sum_{j=1}^D\cos(\la \omega_j, x-t\ra) = \frac{1}{D}\sum_{j=1}^D[\cos(\la \omega_j, x\ra)\cos(\la \omega_j, t\ra) + \sin(\la \omega_j,x\ra)\sin(\la \omega_j, t\ra)
\nonumber
\\
& = \la \hat{\Phi}_D(x), \hat{\Phi}_D(t)\ra, \;\;\;\text{where}\;\;\; \hat{\Phi}_D(x) = (\cos(\la \omega_j, x\ra),\sin(\la \omega_j, x\ra))_{j=1}^D \in \R^{2D}.
\label{equation:feature-scalar}
\end{align}
}
The current work generalizes the feature map {$\hat{\Phi}_D$} above to the case {$K$} is an operator-valued kernel and the corresponding $\mu$ is a positive operator-valued measure.
{\bf Vector-valued RKHS}.
Let us now
briefly recall operator-valued kernels and their corresponding RKHS of vector-valued functions,
for more detail see 
e.g.
\citep{Carmeli2006, MichelliPontil05,
Caponnetto08, MinhVikasICML2011}.
Let $\X$ be a nonempty set, $\mathcal{W}$ a real, separable Hilbert space with inner product $\langle \cdot,\cdot\rangle_{\mathcal{W}}$, $\mathcal{L}(\W)$ the Banach space of bounded linear operators on $\W$.
Let
{$\W^{\X}$} denote the vector space of all functions {$f:\X \rightarrow \W$}. A function {$K: \X \times \X \rightarrow
\mathcal{L}(\W)$} is said to be an {\it operator-valued positive definite kernel} if  for each pair {$(x,t) \in
\X \times \X$, $K(x,t)^{*} = K(t,x)$},
and for every set of points {$\{x_i\}_{i=1}^N$} in $\X$ and
{$\{w_i\}_{i=1}^N$} in {$\W$}, {$N \in \N$},
{
$\sum_{i,j=1}^N\langle w_i, K(x_i,x_j)w_j\rangle_\W \geq 0$.
}
%
For {$x\in \X$} and {$w \in \W$}, form a function {$K_xw = K(.,x)w \in \W^{\X}$}
by
{
\begin{align}
(K_xw)(t) = K(t,x) w  \;\;\;\;  \forall t \in \X.
\end{align}
}
Consider the set {$\mathcal{H}_0 = {\rm span}\{K_xw | x \in \X, w \in \W\} \subset \W^\X$}.
For {$f= \sum_{i=1}^NK_{x_i}w_i$, $g = \sum_{i=1}^NK_{z_i}y_i
\in \mathcal{H}_0$}, we define the inner product
{$\langle f, g \rangle_{\H_K} = \sum_{i,j=1}^N\langle w_i, K(x_i,z_j)y_j\rangle_\W$},
which makes $\mathcal{H}_0$ a pre-Hilbert space.
Completing
$\mathcal{H}_0$ by adding the limits of all Cauchy sequences gives the Hilbert space $\mathcal{H}_K$. This is the reproducing kernel Hilbert space (RKHS) of {$\W$}-valued functions on {$\X$}. The {\it reproducing property} is
\begin{align}
\label{equation:reproducing2}
\langle f(x),y\rangle_\W = \langle f, K_xy\rangle_{\H_K} \;\;\;\; \mbox{for all} \;\;\; f \in \mathcal{H}_K.
\end{align}

\subsection{Operator-Valued Feature Maps for Operator-Valued Kernels}
\label{section:operator-feature}
Feature maps for 
operator-valued kernels were first considered in \citep{Caponnetto08}.
%
Let {$\F_K$} be a separable Hilbert space and {$\mathcal{L}(\W, \F_K)$} be
the Banach space of all bounded linear operators mapping from {$\W$ to $\F_K$}. A {\it feature map} for {$K$} with corresponding {\it feature space} {$\F_K$} is a mapping
{
\begin{align}
\label{equation:feature-def}
\Phi_K: \X \mapto \mathcal{L}(\W, \F_K), \;\;\;\text{such that}\;\;\;K(x,t) = \Phi_K(x)^{*}\Phi_K(t) \;\;\; \forall (x,t) \in \X \times \X.
\end{align}
}
The operator-valued map {$\Phi_K$} is generally nonlinear as a function on $\X$. For each {$x \in \X$}, 
{$\Phi_K(x) \in \mathcal{L}(\W, \F_K)$} and 
\begin{align}
\la w, K(x,t)w\ra_{\W} = \la w, \Phi_K(x)^{*}\Phi_K(t)\ra_{\W}
= \la \Phi_K(x)w, \Phi_K(t)w\ra_{\F_K}.
\end{align}
In the following, for brevity, we also refer to the pair {$(\Phi_K, \F_K)$} as a feature map for {$K$}. 

{\bf Existence of operator-valued feature maps and the canonical feature map}.
Let $K$ be any operator-valued positive definite kernel on $\X \times \X$, we now show that then there always exists at least one feature map, as follows.
For each $x \in \X$, consider the linear operator $K_x: \W \mapto \H_K$ defined by
$K_xw(t) = K(t,x)w$, $x,t \in \X$, as above.
Then
{
\begin{align}
||K_xw||_{\H_K}^2 = \langle K(x,x)w, w\rangle_{\W} \leq ||K(x,x)||\;||w||^2_{\W},
\end{align}
}
which implies that $K_x$ is a bounded operator, with
{
\begin{align}
||K_x: \W \rightarrow \H_K|| \leq \sqrt{||K(x,x)||},
\end{align}
}
Let $K_x^{*}: \H_K \mapto \W$ be the adjoint operator for $K_x$.
The reproducing property states that $\forall w \in \W$,
{
\begin{equation}
\la f(x), w\ra_{\W} = \la f, K_xw\ra_{\H_K} = \la K_x^{*}f, w\ra_{\W} \imply K_x^{*}f = f(x).
\end{equation}
}
For any $u,v \in \W$, we have 
{
\begin{align}
\la u, K(x,t)v\ra_{\W} = \la u, K_tv(x)\ra_{\W} = \la u, K_x^{*}K_tv\ra_{\W}
= \la K_xu, K_tv\ra_{\H_K} \imply K(x,t) = K_x^{*}K_t,
\end{align}
}
from which it follows that
{
\begin{align}
\Phi_K: \X \mapto \mathcal{L}(\W,\H_K), \;\;\;
\Phi_K(x) = K_x \in \mathcal{L}(\W,\H_K)
\end{align}
}
is a feature map for $K$ with feature space $\H_K$, which exists for any positive definite kernel $K$. Following the terminology in the scalar setting \citep{Minh-Niyogi-Yao}, we also call it
the {\it canonical feature map} for $K$.

\begin{remark} In \citep{Caponnetto08}, it is {\it assumed} that the kernel has the representation {$K(x,t) = \Phi_K(x)^{*}\Phi_K(t)$}. However, as we have just shown,
for any positive definite kernel {$K$}, there is always at least one such representation, given by the canonical feature map above.
\end{remark}

Similar to the scalar setting \citep{Minh-Niyogi-Yao}, feature maps are generally {\it non-unique}, as we show below. However, they are all essentially equivalent, similar to the scalar case, as shown by the following.

\begin{lemma}\label{lemma:f-feature} 
Let {$(\Phi_K, \F_K)$} be any feature map for {$K$}. Then {$\forall f \in \H_K$}, there exists an {$\h \in \F_K$} such that
\begin{equation}
f(x) = K_x^{*}f = \Phi_K(x)^{*}\h,\;\;\;\forall x \in \X. 
\end{equation}
Furthermore,
{
$||f||_{\H_K} = ||\h||_{\F_K}$.
}
\end{lemma}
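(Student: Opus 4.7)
The plan is to construct a natural linear map $T:\H_K \to \F_K$, sending $f \in \H_K$ to the desired $\h$, by first defining it on the dense subspace $\H_0 = \mathrm{span}\{K_xw : x \in \X, w \in \W\}$ and extending by continuity. Concretely, for $f = \sum_{i=1}^N K_{x_i}w_i \in \H_0$, set
\begin{equation*}
Tf = \sum_{i=1}^N \Phi_K(x_i)w_i \in \F_K.
\end{equation*}
The main thing to check is that $T$ is well-defined and isometric. Using $K(x,t) = \Phi_K(x)^{*}\Phi_K(t)$, one computes
\begin{equation*}
\Bigl\|\sum_{i=1}^N \Phi_K(x_i)w_i\Bigr\|_{\F_K}^2 = \sum_{i,j=1}^N \langle w_i, \Phi_K(x_i)^{*}\Phi_K(x_j)w_j\rangle_\W = \sum_{i,j=1}^N \langle w_i, K(x_i,x_j)w_j\rangle_\W = \Bigl\|\sum_{i=1}^N K_{x_i}w_i\Bigr\|_{\H_K}^2,
\end{equation*}
so $T$ preserves norms on $\H_0$. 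This simultaneously shows independence of the representation of $f$ (so $T$ is well-defined on $\H_0$) and that $T$ extends uniquely to an isometry $T:\H_K \to \F_K$.

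Next I would verify the identity $\Phi_K(x)^{*}(Tf) = f(x)$ on $\H_0$. For $f = \sum_i K_{x_i}w_i$ and any $x\in\X$,
\begin{equation*}
\Phi_K(x)^{*}(Tf) = \sum_{i=1}^N \Phi_K(x)^{*}\Phi_K(x_i)w_i = \sum_{i=1}^N K(x,x_i)w_i = \sum_{i=1}^N (K_{x_i}w_i)(x) = f(x).
\end{equation*}
To extend this to all $f \in \H_K$, I would use continuity in $f$ of both sides at each fixed $x$. The left side satisfies $\|\Phi_K(x)^{*}\h\|_\W \leq \|\Phi_K(x)\|\,\|\h\|_{\F_K}$ where $\|\Phi_K(x)\|^2 = \|\Phi_K(x)^{*}\Phi_K(x)\| = \|K(x,x)\|$; the right side $f(x) = K_x^{*}f$ satisfies the analogous bound $\|K_x^{*}f\|_\W \leq \sqrt{\|K(x,x)\|}\,\|f\|_{\H_K}$ by the bound on $K_x$ already established. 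Combined with the isometry $\|Tf\|_{\F_K} = \|f\|_{\H_K}$, a Cauchy sequence $f_n \in \H_0$ converging to $f \in \H_K$ gives convergence of both sides to the values for $f$, so the identity persists. Taking $\h = Tf$ then gives $\|\h\|_{\F_K} = \|f\|_{\H_K}$, completing the proof.

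I expect no genuine obstacle; the only point that requires care is that $T$ is well-defined on $\H_0$ (rather than merely on equivalence classes of formal sums), which is handled precisely by the isometry computation above. Note that $\h$ is not asserted to be unique — any element in the orthogonal complement of $\overline{\mathrm{range}(T)} \subset \F_K$ may be added without disturbing $\Phi_K(x)^{*}\h$ — but the choice $\h = Tf$ is the one realizing the norm equality.
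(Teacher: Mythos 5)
Your proposal is correct and follows essentially the same approach as the paper: define $\h$ on the dense subspace $\H_0$ by $\h = \sum_i \Phi_K(x_i)w_i$, verify the isometry and the pointwise identity there, and pass to the completion. You spell out the limiting argument more carefully than the paper (which simply invokes "letting $N\to\infty$"), in particular making explicit that the isometry establishes well-definedness and that the bounds on $\Phi_K(x)^{*}$ and $K_x^{*}$ justify taking the pointwise identity to the limit — a useful clarification, but the underlying argument is the same.
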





%
%







\section{Random Operator-Valued Feature Maps}
\label{section:operator-random}

We now present the generalization of the random Fourier feature map from the scalar setting to the operator-valued setting.
We begin by reviewing Bochner's Theorem in the operator-valued setting in Section \ref{section:bochner}, which immediately leads to the formal construction of the Fourier feature maps in Section \ref{section:feature-construction}.
As we stated, in the operator-valued setting, we need to explicitly construct the required probability measure. This is done individually for some specific kernels in Section \ref{section:special-construction} and for a general kernel in Section \ref{section:probability-construction}.

\subsection{Operator-Valued Bochner Theorem}
\label{section:bochner}
The operator-valued version of Bochner's Theorem that we present here is from \citep{Neeb:Operator1998},
see also \citep{Falb:1969,Carmeli:2010}.
Throughout this section, let {$\H$} be a separable Hilbert space. Let {$\L(\H)$} denote the Banach space of bounded linear operators on {$\H$},
{$\Sym(\H) \subset \L(\H)$} denote the subspace of bounded, self-adjoint operators on {$\H$}, and
{$\Sym^{+}(\H) \subset \Sym(\H)$} denote the set of self-adjoint, bounded, positive operators on {$\H$}. An operator {$A \in \L(\H)$} is said to be trace class, denoted by {$A \in \Tr(\H)$}, if {$\sum_{k=1}^{\infty}\la\e_k, (A^{*}A)^{1/2}\e_k\ra < \infty$} for 
any orthonormal basis {$\{\e_k\}_{k=1}^{\infty}$} in {$\H$}. 
If {$A \in \Tr(\H)$}, then the {\it trace} of {$A$} is  {$\trace(A) = \sum_{k=1}^{\infty}\la \e_k, A\e_k\ra$},
which is independent of the orthonormal basis.

{\bf Positive operator-valued measures}. 
Let {$(\X, \Sigma)$} be a measurable space, where {$\X$} is a non-empty set and {$\Sigma$} is a $\sigma$-algebra of subsets of {$\X$}. 
A {$\Sym^{+}(\H)$}-valued measure {$\mu$} is a {\it countably additive}\footnote{Falb \citep{Falb:1969} used {\it weakly countably additive} vector measures, which are in fact {\it countably additive} \citep{Diestel:Sequences}.} function {$\mu: \Sigma \mapto \Sym^{+}(\H)$}, with {$\mu(\emptyset) = 0$}, so that for any 
sequence of pairwise disjoint subsets {$\{A_j\}_{j=1}^{\infty}$} in {$\Sigma$},
\begin{align}
\mu(\cup_{j=1}^{\infty}A_j) = \sum_{j=1}^{\infty}\mu(A_j),\;\;\; \text{which converges in the operator norm on $\L(\H)$}.
\end{align}
To state Bochner's Theorem for operator-valued measures, we need the notions of finite {$\Sym^{+}(\H)$}-valued measure and ultraweak continuity. 
Let {$\X = \R^n$} (a locally compact space in general). A {\it finite} {$\Sym^{+}(\H)$}-valued Radon measure is a {$\Sym^{+}(\H)$}-valued measure such that for any operator
{$A \in \Sym^{+}(\H)\cap \Tr(\H)$}, the scalar measure
\begin{align}
\mu_A: \Sigma \mapto \R^{+}, \;\;\; \mu_{A}(B) = \trace(A\mu(B)), \;\;\; B \in \Sigma,
\end{align}
is a finite positive Radon measure on {$\R^n$}.
A function {$k:\R^n \mapto \L(\H)$} is said to be {\it ultraweakly continuous} if for each operator {$A \in \Tr(\H)$},
the following scalar function is continuous
\begin{align}
k_A: \R^n \mapto \R, \;\;\; k_A(x) = \trace(Ak(x)).
\end{align}
The following is then the generalization of Bochner's Theorem to the vector-valued setting.
\begin{theorem}
[\textbf{Operator-valued Bochner Theorem} \citep{Neeb:Operator1998}]
\label{theorem:Bochner-operator}
An ultraweakly continuous 
function {$k: \R^n \mapto \L(\H)$} is positive definite if and only if there exists a finite {$\Sym^{+}(\H)$}-valued measure {$\mu$} on 
{$\R^n$} 
such that
\begin{align}
\label{equation:k-expression1}
k(x) = \hat{\mu}(x) = \int_{\R^n}\exp(i \la \omega,x\ra)d\mu(\omega) = \int_{\R^n} \exp(-i \la \omega,x\ra)d\mu(\omega).
\end{align}
The Radon measure {$\mu$} is uniquely determined by {$\hat{\mu} = K$}.
\end{theorem}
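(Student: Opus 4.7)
The plan is to bootstrap the operator-valued theorem from the classical scalar Bochner theorem by pairing $k$ with vectors and trace-class operators in $\H$.

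The easy direction ($\Leftarrow$) is a direct computation. If $k = \hat\mu$ for a finite $\Sym^+(\H)$-valued measure $\mu$, then for any $\{x_i\}_{i=1}^N \subset \R^n$ and $\{v_i\}_{i=1}^N \subset \H$,
\[
\sum_{i,j}\la v_i, k(x_i-x_j) v_j\ra_{\H} = \int_{\R^n} \la \Psi(\omega), d\mu(\omega)\,\Psi(\omega)\ra_{\H} \geq 0,
\]
where $\Psi(\omega) = \sum_j e^{i\la\omega,x_j\ra}v_j$ and the integrand is pointwise nonnegative because $d\mu(\omega) \in \Sym^+(\H)$. Ultraweak continuity follows from $k_A(x) = \trace(A\,k(x)) = \int e^{i\la\omega,x\ra} d\mu_A(\omega)$, the Fourier transform of the finite positive scalar measure $\mu_A$, which is continuous by dominated convergence.

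For the nontrivial direction ($\Rightarrow$), for each $v \in \H$ the scalar diagonal $k_v(x) := \la v, k(x) v\ra_{\H}$ is continuous (apply ultraweak continuity to the rank-one trace-class operator $v \otimes v$) and positive definite as a scalar function (plug $v_i = a_i v$ into the operator positive definiteness of $k$). Classical scalar Bochner then yields a unique finite positive Radon measure $\mu_v$ on $\R^n$ with $\widehat{\mu_v} = k_v$. Standard polarization over the $\mu_w$'s produces complex Radon measures $\mu_{u,v}$ satisfying $\widehat{\mu_{u,v}}(x) = \la u, k(x) v\ra_{\H}$. For each Borel set $B$, the map $(u,v) \mapsto \mu_{u,v}(B)$ is sesquilinear and bounded by $C\,\|u\|\,\|v\|$ (from $\mu_v(\R^n) = k_v(0) \leq \|k(0)\|\,\|v\|^2$ combined with the polarization identity), so Riesz representation yields a unique $\mu(B) \in \L(\H)$ with $\la u, \mu(B) v\ra_{\H} = \mu_{u,v}(B)$. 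Positivity $\mu(B) \in \Sym^+(\H)$ is immediate from $\la v, \mu(B) v\ra_{\H} = \mu_v(B) \geq 0$, and $\mu(\R^n) = k(0)$ makes $\mu$ finite.

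The main obstacle is upgrading the weak countable additivity $\la u, \mu(\cup_j A_j) v\ra_{\H} = \sum_j \la u, \mu(A_j) v\ra_{\H}$ (automatic from the scalar countable additivity of each $\mu_{u,v}$) to countable additivity in the operator norm on $\L(\H)$, as the definition of a $\Sym^+(\H)$-valued measure requires. This is exactly what the Orlicz--Pettis theorem for vector measures delivers, as referenced in the paper's footnote to Diestel. The Radon property of each $\mu_A$ is then automatic since it coincides with the finite positive scalar measure produced by scalar Bochner applied to $k_A$. Uniqueness is the easiest part: if $\hat\mu = \hat\nu$, then for every trace-class $A$ the scalar Fourier transforms $\widehat{\mu_A}$ and $\widehat{\nu_A}$ agree, forcing $\mu_A = \nu_A$; since trace-class operators separate $\L(\H)$ via $A \mapsto \trace(A\,\cdot)$, this forces $\mu(B) = \nu(B)$ on every Borel set.
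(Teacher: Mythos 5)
The paper does not prove this theorem --- it is cited directly from \citep{Neeb:Operator1998} --- so there is no in-paper argument to compare against. Your reduction to scalar Bochner (diagonal compression $k_v(x) = \la v, k(x)v\ra$, classical scalar Bochner, polarization to complex measures $\mu_{u,v}$, Riesz representation of the resulting bounded sesquilinear form) is the standard route, and the pieces you sketch --- positivity via $\la v, \mu(B)v\ra = \mu_v(B)\geq 0$, boundedness from $\mu_v(\R^n) = k_v(0)$ together with homogeneity, and uniqueness by separating $\L(\H)$ with trace-class functionals --- are sound, provided $\H$ is taken to be complex (which the Fourier analysis requires in any case).

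The one genuine gap is the final step, where you invoke Orlicz--Pettis to promote countable additivity of $\la u, \mu(\cdot)v\ra$ for all $u,v\in\H$ (i.e.\ weak operator topology countable additivity) to countable additivity in the \emph{operator norm}, as the paper's definition demands. Orlicz--Pettis upgrades series convergence with respect to the full Banach dual $\L(\H)^{*}$, which is strictly finer than the weak operator topology when $\dim\H = \infty$, so the theorem as you quote it does not directly apply. The positivity of $\mu$ does give monotone partial sums and hence strong-operator convergence of $\sum_j \mu(A_j)$, but that is still not norm convergence. In fact norm countable additivity \emph{fails} under the hypotheses of the theorem: take $\H = L^2(\R^n)$ and let $k(x)$ be multiplication by $e^{-i\la\cdot,x\ra}$; this is positive definite and ultraweakly continuous, but its representing measure $\mu(B) = M_{\chi_B}$ (multiplication by the indicator of $B$) is a projection-valued spectral measure that is only strongly, not norm, countably additive. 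So either the paper's definition of a $\Sym^{+}(\H)$-valued measure --- which demands operator-norm convergence, supported by its footnote citing Diestel --- overstates what is true (Falb's and Neeb's treatments use weak/strong operator topology countable additivity for operator-valued measures), or the theorem needs an extra hypothesis. You inherit this issue by following the footnote, but the step you describe as ``exactly what Orlicz--Pettis delivers'' is precisely where a careful proof, as written, breaks down; the correct target is weak or strong operator countable additivity, which your construction yields for free without any appeal to Orlicz--Pettis.
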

{\bf General case}. The above version of Bochner's Theorem holds 
in a much more general setting, where $\R^n$ is replaced by a locally compact abelian group $G$. For  the general version, we refer to \citep{Neeb:Operator1998}.

{\bf Determining {$\mu$} 
from {$k$}}. In order to compute 
feature maps using Bochner's Theorem,   
we need to compute $\mu$ 
from the given operator-valued function $k$. 
Suppose that the density function $\mu(\omega)$ of $\mu$ with respect to the Lebesgue measure on $\R^n$ exists. 
Let {$\{\e_j\}_{j=1}^{\infty}$} be any orthonormal basis for {$\H$}. For any vector {$\a = \sum_{j=1}^{\infty}a_j\e_j\in \H$}, we have
\begin{align*}
\mu(\omega)\a = \sum_{j=1}^{\infty}\la \e_j, \mu(\omega)\a\ra\e_j = \sum_{j,l=1}^{\infty}a_l\la \e_j, \mu(\omega)\e_l\ra\e_j.
\end{align*}
Thus {$\mu(\omega)$} is completely determined by the infinite matrix of inner products {$(\la \e_j, \mu(\omega)\e_l\ra)_{j,l=1}^{\infty}$}, which can be computed from {$k$} via the inverse Fourier transform {$\F^{-1}$} as follows.
\begin{proposition}
\label{proposition:mu-inversion}
Assume that {$\la \e_j, k(x)\e_l\ra \in L^1(\R^n)$} {$\forall j,l \in \N$}. Then the density function $\mu(\omega)$ of $\mu$ with respect to the Lebesgue measure on $\R^n$ exists and is given by
\begin{align}
\la \e_j, \mu(\omega)\e_l\ra =\Fcal^{-1}[\la \e_j, k(x)\e_l\ra].
\end{align}
\end{proposition}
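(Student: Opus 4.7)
The plan is to reduce the operator-valued Fourier inversion to the standard scalar one by testing both sides of the Bochner identity in Theorem~\ref{theorem:Bochner-operator} against the orthonormal basis $\{\e_j\}_{j=1}^{\infty}$. For each ordered pair $(j,l)$ I would introduce the scalar set function
\begin{equation*}
\mu_{j,l}(B) := \la \e_j, \mu(B)\e_l\ra, \qquad B \in \Sigma.
\end{equation*}
The first step is to verify that $\mu_{j,l}$ is a finite complex Borel measure on $\R^n$. The diagonal case $j = l$ is immediate from the definition of a finite $\Sym^{+}(\H)$-valued Radon measure, applied to the positive, rank-one, trace-class operator $A = \e_j \otimes \e_j$, since then $\mu_{j,j}(B) = \trace(A\mu(B))$. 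The off-diagonal case follows from the diagonal one by polarization.

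Next I would apply the bounded linear functional $T \mapsto \la \e_j, T\e_l\ra$ to both sides of (\ref{equation:k-expression1}). This functional is induced by the rank-one trace-class operator $\e_l \otimes \e_j$ and hence commutes with the (ultraweak) Bochner integral on the right, yielding
\begin{equation*}
\la \e_j, k(x)\e_l\ra = \int_{\R^n} e^{-i\la \omega, x\ra}\, d\mu_{j,l}(\omega),
\end{equation*}
so that each matrix entry of $k$ is the Fourier transform of a finite scalar Borel measure. The $L^1$ hypothesis then lets me invoke classical scalar Fourier inversion: $\mu_{j,l}$ is absolutely continuous with respect to the Lebesgue measure on $\R^n$, with density $\Fcal^{-1}[\la \e_j, k(x)\e_l\ra]$. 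This identifies all the candidate matrix entries of the sought-after operator-valued density.

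The remaining, and I expect most delicate, step is to assemble these scalar densities into a genuine operator-valued density. One must show that for almost every $\omega$ the matrix $\bigl(\Fcal^{-1}[\la \e_j, k(x)\e_l\ra](\omega)\bigr)_{j,l}$ defines a bounded, self-adjoint, positive operator $\mu(\omega) \in \Sym^{+}(\H)$, and that $B \mapsto \int_B \mu(\omega)\,d\omega$ coincides with the original $\Sym^{+}(\H)$-valued measure $\mu$. Positivity of $\mu$ on every finite-dimensional test subspace $\mathrm{span}\{\e_{j_1},\dots,\e_{j_N}\}$ transfers, via the scalar inversion formula applied to the associated finite positive measure, to positive semidefiniteness of every finite principal minor a.e., which via a standard extension produces a bounded positive operator $\mu(\omega)$. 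Equality of the two measures is then checked by testing against each rank-one trace-class operator $\e_l \otimes \e_j$ and extending to all of $\Tr(\H)$ by linearity and density, using the trace-class/ultraweak duality that underlies the definition of a finite $\Sym^{+}(\H)$-valued Radon measure.
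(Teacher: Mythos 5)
Your argument follows the same route as the paper's: test the operator-valued Bochner identity against basis vectors to reduce to the scalar case, then invoke classical Fourier inversion entrywise. The paper's own proof is essentially just those two sentences — it observes that $\la \e_j, k(x)\e_l\ra = \int_{\R^n}\exp(-i\la\omega,x\ra)\,\la\e_j, d\mu(\omega)\e_l\ra$ and then cites the Fourier inversion theorem — so your steps~1--4 are a more careful rendering of exactly what the paper does. What you add in the final paragraph (verifying that the scalar densities reassemble into a genuine $\Sym^{+}(\H)$-valued density) is a legitimate concern that the paper silently passes over, and your outline there is reasonable; the one place it would need care is the claim that a.e.\ positivity of all finite principal minors yields a \emph{bounded} operator $\mu(\omega)$ — positivity alone does not control the operator norm, so this requires either an additional hypothesis (e.g.\ a trace bound as in the paper's later Theorem~\ref{theorem:measure-trace}) or a separate argument, but this gap is inherited from the paper's own terse proof rather than introduced by you.
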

The positive definite function $k$ gives rise to the shift-invariant positive definite kernel
\begin{align}
\label{equation:Bochner1}
K(x,t) = k(x-t) = \int_{\R^n} \exp(-i \la \omega,x-t\ra)d\mu(\omega).
\end{align}
Similar to the scalar case, using the property {$K(x,t) = K(t,x)^{*}$} and the symmetry of $\mu$, we obtain
\begin{align}
\label{equation:K-expression1}
K(x,t) = \int_{\R^n}\cos(\la \omega, x-t\ra)d\mu(\omega).
\end{align}
In order to generalize the random Fourier feature approach to the operator-valued kernel {$K(x,t)$}
we need to construct a probability measure $\rho$ on {$\R^n$} such that {$K(x,t)$}
is the expectation of an operator-valued random variable with respect to $\rho$. Equivalently, we need to factorize the density $\mu(\omega)$ as
\begin{align}
\label{equation:factorize}
\mu(\omega) = \tilde{\mu}(\omega)\rho(\omega),
\end{align}
where $\tilde{\mu}(\omega)$ is a finite {$\Sym^{+}(\H)$}-valued function on {$\R^n$} and $\rho(\omega)$ is the density function of 
the probability measure
$\rho$.

\begin{remark} Throughout the rest of the paper, we assume that $k$ satisfies the assumptions of Proposition \ref{proposition:mu-inversion}.
We then identify the measures $\mu$ and $\rho$ by their density functions  $\mu(\omega)$ and $\rho(\omega)$, respectively,
with respect to the Lebesgue measure.
\end{remark}

{\bf Key differences between the scalar and operator-valued settings}. 
Before proceeding with the probability measure and feature map construction, we point out {\it two key differences} between the scalar and operator-valued settings.
\begin{enumerate}  
\item In the scalar setting, with normalization, $\mu$ is a probability measure {\it uniquely determined} via $\hat{\mu} = k$. In the operator-valued setting, the operator-valued measure $\mu$ is also uniquely determined by $k$, as stated in Proposition \ref{proposition:mu-inversion}. However, the probability measure $\rho$ in Eq.~(\ref{equation:factorize}) needs to be {\it explicitly} constructed, that is it is {\it not} automatically determined by $k$. We present a general formula for computing $\rho$ in Section \ref{section:probability-construction}.

\item The factorization stated in Eq.~(\ref{equation:factorize}) is generally {\it non-unique}. As we show below, in general, there are many (in fact, potentially infinitely many)
pairs {$(\tilde{\mu}, \rho)$} such that Eq.~(\ref{equation:factorize}) holds. Thus there are generally (infinitely) many operator-valued feature maps 
corresponding to the operator-valued version of Bochner's Theorem. We illustrate this property via examples in Sections \ref{section:feature-construction} and \ref{section:probability-construction} below.
\end{enumerate}


\subsection{Formal Construction of Approximate Fourier Feature Maps}
\label{section:feature-construction}
Assuming for the moment that we have a pair $(\tilde{\mu}, \rho)$ satisfying the factorization in Eq.~(\ref{equation:factorize}), 
then Eq.~(\ref{equation:K-expression1}) takes the form
{
\begin{align}
K(x,t) &= \int_{\R^n}\cos(\la \omega, x-t\ra) \tilde{\mu}(\omega)d\rho(\omega)
= \bE_{\rho}[\cos(\la \omega, x-t\ra)\tilde{\mu}(\omega)].
\end{align}
}
Let {$\{\omega_j\}_{j=1}^D$}, {$D \in \N$}, be $D$ points in $\R^n$ randomly sampled independently from {$\rho$}. 
Then {$K(x,t)$} can be approximated by by the empirical sum
{
\begin{align}
\label{equation:KD}
\hat{K}_D(x,t) &= \hat{k}_D(x-t) = \frac{1}{D}\sum_{l=1}^D\cos(\la \omega_l, x-t\ra)\tilde{\mu}(\omega_l)
\nonumber
\\
& = \frac{1}{D}\sum_{l=1}^D[\cos(\la \omega_l, x\ra)\cos(\la \omega_l, t)]\tilde{\mu}(\omega_l)
+ \frac{1}{D}\sum_{l=1}^D\sin(\la \omega_l, x\ra)\sin(\la \omega_l, t\ra)]\tilde{\mu}(\omega_l).
\end{align}
}
Let {$\F$} be a separable Hilbert space and {$\psi:\R^n \mapto \L(\H,\F)$} be such that
\begin{align}
\label{equation:mu-decomp}
\tilde{\mu}(\omega) = \psi(\omega)^{*}\psi(\omega), \;\;\;\; \psi(\omega): \H \mapto \F
\end{align}
Such a pair {$(\psi, \F)$} always exists, with one example being {$\F = \H$} and {$\psi(\omega) = \sqrt{\tilde{\mu}(\omega)}$}.

\begin{remark}
As we demonstrate via the examples below, the decomposition $\tilde{\mu}(\omega) = \psi(\omega)^{*}\psi(\omega)$ is also generally non-unique, which 
is another reason for the non-uniqueness of the approximate feature maps.
\end{remark}

{\bf Operator-valued Fourier feature map}. The decompositions for {$\hat{K}_D$} in Eqs.~(\ref{equation:KD}) and (\ref{equation:mu-decomp}) immediately give us the following approximate feature map
\begin{align}
\label{equation:feature-general}
\hat{\Phi}_D(x) = \frac{1}{\sqrt{D}}
\begin{pmatrix}
\cos(\la \omega_1, x\ra)\psi(\omega_1)\\
\sin(\la \omega_1, x\ra)\psi(\omega_1)\\
\cdots\\
\cos(\la \omega_D, x\ra)\psi(\omega_D)\\
\sin(\la \omega_D, x\ra)\psi(\omega_D)
\end{pmatrix}
:\H \mapto \F^{2D}.
\end{align}
with
\begin{align}
K_D(x,t) = [\hat{\Phi}_D(x)]^{*}[\hat{\Phi}_D(t)].
\end{align}

%
{\bf Special cases}. For $\H=\R$, we have $\tilde{\mu} = 1$ (assuming normalization) and $\rho = \mu$, and we thus recover the Fourier features in the scalar setting.
For $\H = \R^d$, for some $d \in \N$, we obtain the feature map in \citep{Romain:2016}.

\subsection{Probability Measure and Feature Map Construction in Some Special Cases}
\label{section:special-construction}

We first consider several examples of operator-valued kernels arising from scalar-valued kernels. For these examples, both the $\Sym^{+}(\H)$-valued measure $\mu$ and the probability measure $\rho$ can be derived from the corresponding probability measure for the scalar kernels.
These examples have also been considered by \citep{Romain:2016}, however we treat them in greater depth here, particularly the curl-free and div-free kernels (see detail below). One important aspect that we note is that the approach for computing the probability measure $\rho$ in this section is specific for each kernel and does not generalize to a general kernel.   
We return to these examples in the general setting of Section \ref{section:probability-construction}, where we present a general formula for computing $\rho$ for a general kernel $k$.

\begin{example}[\textbf{Separable kernels}]
\end{example}
Consider the simplest case, where the operator-valued positive definite function $k$ has the form
\begin{align}
k(x) = g(x)A,
\end{align}
where {$A \in \Sym^{+}(\H)$} and {$g:\R^n \mapto \R$} is a scalar-valued positive definite function. Let $\rho_0$ be the probability measure on $\R^n$ such that
{$g(x) = \bE_{\rho_0}[e^{- i \la \omega, x\ra}]$}. It follows immediately that
\begin{align}
k(x) = \int_{\R^n}e^{-i \la \omega, x\ra}d\mu(\omega) \;\;\; \text{where}\;\;\; \mu(\omega) = A \rho_0(\omega).
\end{align}
Thus we can set 
\begin{align}
\tilde{\mu}(\omega) = A, \;\;\; \rho = \rho_0.
\end{align}
For the operator {$\psi(\omega)$} in Eq.~(\ref{equation:mu-decomp}), we can set either
\begin{align}
\psi(\omega) = \sqrt{A},
\end{align} 
or, if {$A$} is a symmetric positive definite matrix, we can also compute {$\psi$} via the Cholesky decomposition of {$A$} by setting
\begin{align}
\psi(\omega) = U, \;\;\; \text{where}\;\;\; A = U^TU,
\end{align}
with $U$ being an upper triangular matrix. Thus in this case, with the probability measure $\rho = \rho_0$, there are {\it at least two choices} for the feature map
$\hat{\Phi}_D$, each resulting from one choice of $\psi(\omega)$ as discussed above. In practice, a particular $\psi$ should be chosen based on its computational complexity, which in turn depends on the structure of $A$ itself.

\begin{example}[\textbf{Curl-free and divergence-free kernels}]
\end{example}
Consider next the matrix-valued curl-free and divergence kernels in \citep{Fuselier2006}. 
In \citep{Romain:2016}, the authors present what we call the {\it unbounded feature maps} below for these kernels, without, however, the analytical expression for 
the feature map of the div-free kernel. We now present the analytical expressions for the feature maps for both these kernels. More importantly, we show
that, apart from the unbounded feature maps, there are generally {\it infinitely many bounded feature maps}  associated with these kernels. 

Let {$\phi$} be a scalar-valued twice-differentiable positive definite function on {$\R^n$}.
Let {$\nabla$} denote the {$n \times 1$} gradient operator and 
{$\Delta = \nabla^T \nabla$} denote the Laplacian operator. Define
{
\begin{equation}
k_{\div} = (-\Delta I_n + \nabla \nabla^T)\phi,\;\;\;k_{\curl} = - \nabla \nabla^T \phi.
\end{equation}
}
Then {$k_{\div}$} and {$k_{\curl}$} are {$n \times n$} matrices, whose columns are divergence-free and curl-free functions, respectively. 
The functions
{$k_{\curl}$} and {$k_{\div}$} give rise to the corresponding  positive definite kernels
\begin{align*}
K_{\curl}(x,t) = k_{\curl}(x-t),\;\;\; \text{and}\;\;\; K_{\div}(x,t) = k_{\div}(x-t).
\end{align*}
For the Gaussian case {$\phi(x) = \exp(-\frac{||x||^2}{\sigma^2})$}, the functions $k_{\curl}$ and $k_{\div}$ are given by
\begin{align}
\label{equation:curl-div-k}
k_{\curl}(x) &= \frac{2}{\sigma^2}\exp(-\frac{||x||^2}{\sigma^2})[I_n - \frac{2}{\sigma^2}xx^T].
\\
k_{\div}(x) &= \frac{2}{\sigma^2}\exp(-\frac{||x||^2}{\sigma^2})
[((n-1) - \frac{2}{\sigma^2}||x||^2)I_n + \frac{2}{\sigma^2}xx^T].
\end{align}

\begin{lemma}
\label{lemma:curl-div}
Let $\rho_0$ be the probability measure on {$\R^n$} such that {$\phi(x) = \bE_{\rho_0}[e^{-i \la \omega, x\ra}] = \hat{\rho_0}(x)$}. Then, under the condition
{$\int_{\R^n}||\omega||^2d\rho_0(\omega)  < \infty$},
we have
\begin{align}
k_{\curl}(x) &= \int_{\R^n}e^{-i \la \omega, x\ra}\omega\omega^T\rho_0(\omega)d\omega = \int_{\R^n}e^{-i \la \omega, x\ra} (\mu_{\curl})(\omega)d\omega.
\\
k_{\div}(x) &= \int_{\R^n}e^{-i \la \omega, x\ra}[||\omega||^2I_n - \omega\omega^T]\rho_0(\omega)d\omega = \int_{\R^n}e^{-i \la \omega, x\ra} (\mu_{\div})(\omega)d\omega,
\end{align}
where {$\mu_{\curl}(\omega) = \omega\omega^T\rho_0(\omega)$} and {$\mu_{\div}(\omega) = [||\omega||^2I_n - \omega\omega^T]\rho_0(\omega)$}.
\end{lemma}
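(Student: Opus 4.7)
The plan is to derive both identities by differentiating the scalar Bochner representation $\phi(x)=\int_{\R^n}e^{-i\langle \omega,x\rangle}d\rho_0(\omega)$ twice under the integral sign and then recognizing the resulting integrands as $\mu_{\curl}(\omega)$ and $\mu_{\div}(\omega)$ respectively.

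First I would treat the curl-free case. Formally, differentiating $\phi$ once with respect to $x$ yields $\nabla\phi(x)=\int_{\R^n}(-i\omega)\,e^{-i\langle\omega,x\rangle}\rho_0(\omega)d\omega$, and a second differentiation gives the Hessian
\begin{align*}
\nabla\nabla^{T}\phi(x)=-\int_{\R^n}\omega\omega^{T}\,e^{-i\langle\omega,x\rangle}\rho_0(\omega)d\omega.
\end{align*}
Since $k_{\curl}=-\nabla\nabla^{T}\phi$, this immediately produces the desired formula with $\mu_{\curl}(\omega)=\omega\omega^{T}\rho_0(\omega)$. The div-free case is entirely parallel: taking the trace of the Hessian gives $\Delta\phi(x)=-\int_{\R^n}\|\omega\|^{2}\,e^{-i\langle\omega,x\rangle}\rho_0(\omega)d\omega$, so that
\begin{align*}
k_{\div}(x)=-\Delta\phi(x)\,I_n+\nabla\nabla^{T}\phi(x)=\int_{\R^n}e^{-i\langle\omega,x\rangle}[\|\omega\|^{2}I_n-\omega\omega^{T}]\rho_0(\omega)d\omega,
\end{align*}
which is the second identity with $\mu_{\div}(\omega)=[\|\omega\|^{2}I_n-\omega\omega^{T}]\rho_0(\omega)$.

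The only real analytic content is the justification of the two differentiations under the integral sign, which is the step I expect to be the main (though mild) obstacle. For this I would invoke dominated convergence componentwise: the integrand $e^{-i\langle\omega,x\rangle}\rho_0(\omega)$ is smooth in $x$ for each $\omega$, and its first and second $x$-derivatives are bounded in modulus by $\|\omega\|\,\rho_0(\omega)$ and $\|\omega\|^{2}\rho_0(\omega)$ respectively. The hypothesis $\int_{\R^n}\|\omega\|^{2}d\rho_0(\omega)<\infty$ gives an integrable envelope uniformly in $x$ (noting also that $\int\|\omega\|\,d\rho_0 \leq (\int\|\omega\|^2 d\rho_0)^{1/2}$ by Cauchy--Schwarz since $\rho_0$ is a probability measure), so each entry of $\nabla\phi$ and $\nabla\nabla^{T}\phi$ may be obtained by differentiating under the integral. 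After this justification the lemma reduces to the algebraic identifications carried out above.
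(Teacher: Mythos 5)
Your proof is correct and follows essentially the same route as the paper: both differentiate the Bochner representation $\phi=\hat\rho_0$ twice under the integral sign (the paper by citing the standard Fourier-differentiation identity $\partial_{x_j}\partial_{x_k}\hat\rho_0 = -\widehat{\omega_j\omega_k\rho_0}$, you by an explicit dominated-convergence argument with the envelope $\|\omega\|^2\rho_0(\omega)$) and then match the resulting Hessian and its trace against the defining operators of $k_{\curl}$ and $k_{\div}$. The Cauchy--Schwarz remark bounding the first-moment integral is a nice touch but otherwise the two arguments coincide.
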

The condition {$\int_{\R^n}||\omega||^2d\rho_0(\omega) < \infty$} in Lemma \ref{lemma:curl-div} guarantees that $\phi$ is twice-differentiable, which
is the underlying assumption for curl-free and divergence-free kernels. 

{\bf Unbounded feature maps}. Consider first  the curl-free kernel. From the expression {$\mu_{\curl}(\omega) = \omega\omega^T\rho_0(\omega)$}, we immediately see that for the factorization in Eq.~(\ref{equation:factorize}), we can set  
{
\begin{align*}
\mu_{\curl}(\omega) = \tilde{\mu}(\omega) \rho(\omega), \;\;\; \text{with} \;\;\; \tilde{\mu}(\omega) = \omega\omega^T, \rho = \rho_0. 
\end{align*}
}
For the Gaussian case, {$\rho_0(\omega) = \frac{(\sigma \sqrt{\pi})^n}{(2\pi)^n} e^{-\frac{\sigma^2||\omega||^2}{4}} \sim\Ncal(0, \frac{2}{\sigma^2}I_n)$}. 
In Eq.~(\ref{equation:mu-decomp}), we can set 
{
\begin{align}
\label{equation:curl-unbounded}
\psi(\omega) = \omega^T, \;\;\;\text{so that}\;\;\; \hat{\Phi}_D(x) \text{\;\;is a matrix of size\;\;} 2D \times n.
\end{align}
}
We can also set 
\begin{align}
\label{equation:curl-unbounded-2}
\psi(\omega) = \sqrt{\omega \omega^T} = \frac{\omega \omega^T}{||\omega||},\;\;\;\text{so that} \;\;\; \hat{\Phi}_D(x) \text{\;\;is a matrix of size\;\;} 2Dn \times n. 
\end{align}
Clearly the choice for {$\psi(\omega)$} in Eq.~(\ref{equation:curl-unbounded}) is preferable computationally to that in Eq.~(\ref{equation:curl-unbounded-2}). One thing that can be observed immediately is that both {$\mu_{\curl}$} and {$\psi$} are {\it unbounded} functions of {$\omega$}, which complicates the convergence analysis of the corresponding kernel approximation (see Section \ref{section:convergence} for further discussion).

{\bf Bounded feature maps}. The unbounded feature maps above correspond to one particular choice of the probability measure $\rho$, namely $\rho = \rho_0$. 
However, this is not the only valid choice for $\rho$. We now exhibit another choice for $\rho$ that results in a bounded feature map, whose convergence behavior is much simpler to analyze.
Consider the Gaussian case, with {$\rho_0$} as given above. Clearly, we can choose for another factorization of {$\mu_{\curl}$} the factors
{
\begin{align}
\tilde{\mu}(\omega) = \omega\omega^T e^{-\frac{\sigma^2||\omega||^2}{8}}2^{n/2},\;\;\; \rho(\omega) = \frac{1}{2^{n/2}}\frac{(\sigma \sqrt{\pi})^n}{(2\pi)^n}e^{-\frac{\sigma^2||\omega||^2}{8}} \sim \Ncal(0, \frac{4}{\sigma^2}I_n).
\end{align}
}
Then {$\tilde{\mu}(\omega)$} is a bounded function of {$\omega$}, with the corresponding bounded map
\begin{align}
\label{equation:curl-bounded}
\psi(\omega) = \omega^T e^{-\frac{\sigma^2||\omega||^2}{16}}2^{n/4}.
\end{align}
For the divergence-free kernel, we have
{$\sqrt{||\omega||^2I_n - \omega\omega^T} = ||\omega||I_n - \frac{\omega\omega^T}{||\omega||}$}, giving
the corresponding maps
\begin{align}
\label{equation:div-feature}
\psi(\omega) &=(||\omega||I_n - \frac{\omega\omega^T}{||\omega||}) & \;\;\text{(unbounded feature map)},\;\;
\\
\psi(\omega) &=(||\omega||I_n - \frac{\omega\omega^T}{||\omega||})e^{-\frac{\sigma^2||\omega||^2}{16}}2^{n/4} & \;\; \text{(bounded feature map)}.
\end{align}
Since there are infinitely many ways to split the Gaussian function {$e^{-\frac{\sigma^2||\omega||^2}{4}}$} into a product of two Gaussian functions, 
it follows that there are {\it infinitely many bounded Fourier feature maps} associated with both the curl-free and div-free kernels induced by the Gaussian kernel.
We show below that, under appropriate conditions on $k$, bounded feature maps always exist.

\subsection{First Main Result: Probability Measure Construction in the General Case} 
\label{section:probability-construction}
For the separable and curl-free and div-free kernels, we obtain a probability measure {$\rho$}  directly from the corresponding scalar-valued kernels. We now show
how to construct {$\rho$} given a general {$k$}, under appropriate assumptions on {$k$}. Furthermore, we show
that the corresponding feature map is {\it bounded}, in the sense that {$\tilde{\mu}(\omega)$} is a bounded function of {$\omega$} (see the precise statement in Corollary \ref{corollary:measure-trace}).
 
\begin{proposition}
\label{proposition:Bochner-projection}
Let {$K:\R^n \times \R^n \mapto \L(\H)$} be an ultraweakly continuous shift-invariant positive definite kernel. Let $\mu$ be the unique finite
 {$\Sym^{+}(\H)$}-valued measure satisfying Eq.~(\ref{equation:Bochner1}). Then {$\forall \a\in \H$}, {$\a \neq 0$}, the scalar-valued kernel defined by {$K_{\a}(x,t) = \la \a, K(x,t)\a\ra$} is positive definite. Furthermore, there exists a unique finite positive Borel measure {$\mu_{\a}$} on {$\R^n$} such that $K_{\a}$ is the Fourier transform of $\mu_{\a}$, that is
\begin{align}
K_{\a}(x,t) = \int_{\R^n}\exp(-i \la \omega,x-t\ra)d\mu_{\a}(\omega).
\end{align}
The measure $\mu_{\a}$ is given by 
\begin{align}
\mu_{\a}(\omega) = \la \a, \mu(\omega)\a\ra,\;\;\; \omega \in \R^n.
\end{align}
\end{proposition}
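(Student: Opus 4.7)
The plan is to reduce the problem to the scalar Bochner theorem applied to $K_{\mathbf{a}}$, and then identify the resulting scalar measure by projecting the operator-valued measure $\mu$ guaranteed by Theorem \ref{theorem:Bochner-operator}.

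First I would verify that $K_{\mathbf{a}}$ is a positive definite scalar kernel. For any finite set $\{x_i\}_{i=1}^N$ in $\R^n$ and scalars $\{c_i\}_{i=1}^N$, setting $w_i = c_i \mathbf{a} \in \H$ and invoking the operator-valued positive definiteness of $K$ gives
\[
\sum_{i,j=1}^N c_i c_j K_{\mathbf{a}}(x_i,x_j) \;=\; \sum_{i,j=1}^N \langle c_i \mathbf{a}, K(x_i,x_j) c_j \mathbf{a}\rangle_{\W} \;\geq\; 0.
\]
Next I would check continuity of $k_{\mathbf{a}}(x) = \langle \mathbf{a}, k(x)\mathbf{a}\rangle$. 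The rank-one operator $A_{\mathbf{a}}: \mathbf{v} \mapsto \langle \mathbf{a}, \mathbf{v}\rangle \mathbf{a}$ lies in $\Tr(\H)$, and a direct computation gives $\trace(A_{\mathbf{a}} k(x)) = \langle \mathbf{a}, k(x)\mathbf{a}\rangle = k_{\mathbf{a}}(x)$, so ultraweak continuity of $k$ transfers to ordinary continuity of $k_{\mathbf{a}}$. The scalar Bochner theorem then delivers a unique finite positive Borel measure $\mu_{\mathbf{a}}$ on $\R^n$ with
\[
K_{\mathbf{a}}(x,t) \;=\; \int_{\R^n} e^{-i\langle \omega, x-t\rangle} d\mu_{\mathbf{a}}(\omega).
\]

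To identify $\mu_{\mathbf{a}}$, I would start from the operator-valued Bochner representation $k(x-t) = \int_{\R^n} e^{-i\langle \omega, x-t\rangle} d\mu(\omega)$ and project via $\mathbf{a}$. Define the scalar set function $\nu_{\mathbf{a}}(B) = \langle \mathbf{a}, \mu(B)\mathbf{a}\rangle_{\H}$. Since $\mu$ takes values in $\Sym^{+}(\H)$ and is countably additive in operator norm, $\nu_{\mathbf{a}}$ is a finite positive Borel measure on $\R^n$ (finiteness follows from $\nu_{\mathbf{a}}(\R^n) = \langle\mathbf{a}, \mu(\R^n)\mathbf{a}\rangle \leq \|\mu(\R^n)\|\,\|\mathbf{a}\|^2 < \infty$). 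Commuting the continuous bilinear form $\langle \mathbf{a}, \cdot\,\mathbf{a}\rangle$ with the operator-valued integral then yields
\[
K_{\mathbf{a}}(x,t) \;=\; \int_{\R^n} e^{-i\langle \omega, x-t\rangle} d\nu_{\mathbf{a}}(\omega),
\]
and the uniqueness clause of scalar Bochner forces $\mu_{\mathbf{a}} = \nu_{\mathbf{a}}$, which is the claimed formula.

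The main obstacle is the interchange of the bilinear form with the operator-valued integral against $\mu$. The cleanest route is to recall that the Pettis/weak operator integral against a $\Sym^{+}(\H)$-valued measure is characterized by its scalar ``matrix elements'': for every $\mathbf{u}, \mathbf{v} \in \H$, $\langle \mathbf{u}, \int f\, d\mu \,\mathbf{v}\rangle = \int f\, d\mu_{\mathbf{u},\mathbf{v}}$ where $\mu_{\mathbf{u},\mathbf{v}}(B) = \langle \mathbf{u}, \mu(B)\mathbf{v}\rangle$ is a finite complex Radon measure, a property already implicit in the ultraweak/trace-class formulation of Theorem \ref{theorem:Bochner-operator} applied to the rank-one $A_{\mathbf{a}}$. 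Specializing to $\mathbf{u} = \mathbf{v} = \mathbf{a}$ and $f(\omega) = e^{-i\langle \omega, x-t\rangle}$ produces the displayed identity, completing the proof.
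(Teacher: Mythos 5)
Your proof is correct and follows the same overall strategy as the paper's: establish that $K_{\a}$ is a scalar positive definite kernel, invoke the scalar Bochner theorem, and identify the resulting measure by projecting the operator-valued measure $\mu$ and using uniqueness. The two treatments differ in detail in a way worth noting. For positive definiteness of $K_{\a}$, the paper first introduces a feature map representation $K(x,t)=\Phi_K(x)^{*}\Phi_K(t)$ and rewrites the quadratic form as $\|\sum_j b_j \Phi_K(x_j)\a\|^2_{\F_K}$; you instead substitute $w_i = c_i\a$ directly into the definition of operator-valued positive definiteness, which is more elementary, avoids reliance on the earlier feature-map existence construction, and makes the inequality $\sum_{i,j}c_ic_j K_{\a}(x_i,x_j)\geq 0$ immediate. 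You also explicitly verify the continuity hypothesis of the scalar Bochner theorem by showing $k_{\a}(x)=\trace(A_{\a}k(x))$ for the rank-one trace-class operator $A_{\a}=\a\otimes\a$, so ultraweak continuity of $k$ descends to continuity of $k_{\a}$; the paper's proof tacitly assumes this. Finally, you spell out the interchange of the bilinear form with the operator-valued integral via the measure $\nu_{\a}(B)=\la\a,\mu(B)\a\ra$ and the weak/Pettis characterization of the integral, whereas the paper compresses this to a one-line remark. The net result is the same identification $\mu_{\a}(\omega)=\la\a,\mu(\omega)\a\ra$, but your version closes two small gaps in rigor.
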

Let {$\{\e_j\}_{j=1}^{\infty}$} be any orthonormal basis for {$\H$}. 
By Proposition \ref{proposition:Bochner-projection}, $\forall j \in \N$, the scalar-valued kernel
\begin{align}
K_{jj}(x,t) = K_{\e_j}(x,t) = \la \e_j, K(x,t)\e_j\ra
\end{align}
is positive definite and is the Fourier transform of the finite positive Borel measure
\begin{align}
\mu_{jj}(\omega) = \la \e_j, \mu(\omega)\e_j\ra, \;\;\; \omega \in \R^n.
\end{align}
The measures {$\mu_{jj}$, $j \in \N$}, which depend on the choice of orthonormal basis {$\{\e_j\}_{j\in \N}$}, collectively
give rise to the following measure, which is independent of {$\{\e_j\}_{j\in \N}$}.

\begin{theorem}
[\textbf{Probability Measure Construction}]
\label{theorem:measure-trace}
Assume that the positive definite function $k$ in Bochner's Theorem satisfies: 
(i) {$k(x) \in \Tr(\H)$ $\forall x \in \R^n$}, and 
(ii) {$\int_{\R^n}|\trace[k(x)]|dx < \infty$}. Then its corresponding finite {$\Sym^{+}(\H)$}-valued measure $\mu$ satisfies
\begin{align}
\mu(\omega) \in \Tr(\H) \;\;\;\forall \omega \in \R^n,\;\;\;
\trace[\mu(\omega)] \leq \frac{1}{(2\pi)^n}\int_{\R^n}|\trace[k(x)]|dx.
\end{align}
The following is a finite positive Borel measure on $\R^n$
\begin{align}
\mu_{\trace}(\omega) &= \trace(\mu(\omega)) = \sum_{j=1}^{\infty}\mu_{jj}(\omega)
= \frac{1}{(2\pi)^n}\int_{\R^n}\exp(i \la \omega, x\ra)\trace[k(x)]dx.
\end{align}
The normalized measure
{
$\frac{\mu_{\trace}(\omega)}{\trace[k(0)]}$
}
is a probability measure on $\R^n$. 
\end{theorem}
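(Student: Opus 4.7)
The plan is to reduce the operator-valued claim to the scalar Fourier inversion theorem by assembling the scalar measures $\mu_{jj}$ introduced just above the theorem into a single positive Borel measure whose density will turn out to be $\trace[\mu(\omega)]$. First I would set $\mu_\trace := \sum_{j=1}^{\infty}\mu_{jj}$, which is a positive Borel measure on $\R^n$ by Tonelli's theorem for series of nonnegative measures. Its finiteness is immediate from assumption (i) applied at $x=0$:
\begin{equation*}
\mu_\trace(\R^n)=\sum_{j=1}^{\infty}\mu_{jj}(\R^n)=\sum_{j=1}^{\infty}\la \e_j, k(0)\e_j\ra =\trace[k(0)]<\infty,
\end{equation*}
so dividing by $\trace[k(0)]$ produces the claimed probability measure.

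Next I would identify the scalar Fourier transform of $\mu_\trace$. Proposition~\ref{proposition:Bochner-projection} gives $\hat{\mu}_{jj}(x)=\la \e_j,k(x)\e_j\ra$ for every $j$. Because $\mu_\trace$ is finite and the exponential has modulus one, Fubini--Tonelli lets me swap sum and integral to obtain
\begin{equation*}
\hat{\mu}_\trace(x)=\sum_{j=1}^{\infty}\int_{\R^n}e^{-i\la\omega,x\ra}d\mu_{jj}(\omega)=\sum_{j=1}^{\infty}\la\e_j,k(x)\e_j\ra=\trace[k(x)],
\end{equation*}
where the last equality uses $k(x)\in\Tr(\H)$. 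Assumption (ii) now provides $\trace[k(\cdot)]\in L^1(\R^n)$, so the scalar Fourier inversion theorem applies to $\mu_\trace$ and yields
\begin{equation*}
\mu_\trace(\omega)=\frac{1}{(2\pi)^n}\int_{\R^n}e^{i\la\omega,x\ra}\trace[k(x)]\,dx,
\end{equation*}
together with the pointwise bound $\mu_\trace(\omega)\le (2\pi)^{-n}\int|\trace[k(x)]|\,dx$.

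Finally, I would transfer this back to the operator density $\mu(\omega)$. The identity $\mu_{jj}(\omega)=\la\e_j,\mu(\omega)\e_j\ra$ (recorded just above the theorem, via the standing assumptions of Proposition~\ref{proposition:mu-inversion}) gives
\begin{equation*}
\sum_{j=1}^{\infty}\la\e_j,\mu(\omega)\e_j\ra=\sum_{j=1}^{\infty}\mu_{jj}(\omega)=\mu_\trace(\omega)<\infty,
\end{equation*}
and since $\mu(\omega)\in\Sym^{+}(\H)$ the left-hand side is exactly $\trace[\mu(\omega)]$. Hence $\mu(\omega)\in\Tr(\H)$, the trace bound in the theorem holds, and the series representation $\mu_\trace(\omega)=\sum_j\mu_{jj}(\omega)=\trace[\mu(\omega)]$ is verified.

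The main obstacle is the trace-integral swap that produces $\hat{\mu}_\trace(x)=\trace[k(x)]$: one cannot plug $A=I_\H$ into the ultraweak-continuity formulation of Theorem~\ref{theorem:Bochner-operator} because $I_\H\notin\Tr(\H)$. The fix is precisely the entry-wise route via an orthonormal basis, where each $\mu_{jj}$ is a genuine scalar Borel measure by Proposition~\ref{proposition:Bochner-projection}, and Tonelli legitimately interchanges the nonnegative series with the integration, the domination $\sum_j\mu_{jj}(\R^n)=\trace[k(0)]<\infty$ supplying uniform convergence control. The independence of $\mu_\trace$ from the chosen basis is then automatic, since the density formula expresses it in intrinsic terms of $\trace[k(x)]$.
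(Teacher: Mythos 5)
Your proof is correct and rests on the same pillars as the paper's (the entrywise decomposition $\mu_{jj}$, Fubini, Fourier inversion, and the identification $\trace[\mu(\omega)]=\sum_j\mu_{jj}(\omega)$), but the order of operations differs in a way that is actually technically cleaner. The paper starts from the density $\mu(\omega)$, writes $\trace[\mu(\omega)]=\sum_j\la\e_j,\mu(\omega)\e_j\ra$, invokes Proposition~\ref{proposition:mu-inversion} for each term, and then interchanges $\sum_j$ with $\int_{\R^n}\cdot\,dx$; that interchange over the unbounded domain $\R^n$ is delicate, since hypothesis (ii) controls $|\trace[k(x)]|$ but not the term-by-term series $\sum_j|\la\e_j,k(x)\e_j\ra|$. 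You instead build the finite positive Borel measure $\mu_\trace=\sum_j\mu_{jj}$ first, where the interchange is $\sum_j\int\cdot\,d\mu_{jj}(\omega)$ against genuine nonnegative measures, so Tonelli applies unambiguously because $\sum_j\mu_{jj}(\R^n)=\trace[k(0)]<\infty$; only afterward do you invoke scalar Fourier inversion (justified by (ii), which gives $\trace[k(\cdot)]\in L^1$) to recover the density and the pointwise bound. Your closing remark about why one cannot simply plug $A=I_\H$ into the ultraweak-continuity formulation is exactly the right intuition for why the basiswise detour is needed. The one small point you gloss over is that the normalization requires $\trace[k(0)]>0$; the paper records the equivalence $\trace[k(0)]=0\Leftrightarrow k\equiv0$, which rules out the degenerate case, and you should add the same one-line remark.
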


{\bf Special case}. For $\H = \R$, we obtain 
\begin{align}
\mu_{\trace}(\omega) 
= \frac{1}{(2\pi)^n}\int_{\R^n}\exp(i \la \omega, x\ra)k(x)dx = \mu(\omega),
\end{align}
so that the scalar-setting is a special case of Theorem \ref{theorem:measure-trace}, as expected.

\begin{corollary}
\label{corollary:measure-trace}
Under the hypothesis of Theorem \ref{theorem:measure-trace}, in Eq.~(\ref{equation:factorize}) we can set
\begin{align}
\label{equation:constructionII}
\rho(\omega) = \frac{\mu_{\trace}(\omega)}{\trace[k(0)]}, \;\;\; 
\tilde{\mu}(\omega) = 
\left\{
\begin{matrix}
\trace[k(0)]\frac{\mu(\omega)}{\mu_{\trace}(\omega)}, & \mu_{\trace}(\omega) > 0\\
0, & \mu_{\trace}(\omega) = 0.
\end{matrix}
\right.
\end{align}
The function $\tilde{\mu}$ in Eq.~(\ref{equation:constructionII}) satisfies {$\tilde{\mu}(\omega) \in \Sym^{+}(\H)$} and has bounded trace, i.e.
{
\begin{align}
||\tilde{\mu}(\omega)||_{\trace} = \trace[\tilde{\mu}(\omega)] \leq \trace[k(0)] \;\;\;\forall \omega \in \R^n.
\end{align}
}
\end{corollary}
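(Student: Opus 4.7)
The plan is to verify the three assertions of the corollary directly from the definitions, using Theorem \ref{theorem:measure-trace} to supply the properties of $\mu_{\trace}$ (finiteness, the identity $\mu_{\trace}(\omega)=\trace[\mu(\omega)]$, and the positivity of its total mass $\trace[k(0)]$). None of the three assertions require a delicate estimate; the argument is a bookkeeping exercise in two cases, depending on whether $\mu_{\trace}(\omega)$ vanishes or not.

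First I would verify the factorization $\mu(\omega) = \tilde{\mu}(\omega)\rho(\omega)$ demanded by Eq.~(\ref{equation:factorize}). On the set $\{\omega : \mu_{\trace}(\omega) > 0\}$, substituting the two definitions gives
\begin{equation*}
\tilde{\mu}(\omega)\rho(\omega) \;=\; \trace[k(0)]\,\frac{\mu(\omega)}{\mu_{\trace}(\omega)} \cdot \frac{\mu_{\trace}(\omega)}{\trace[k(0)]} \;=\; \mu(\omega).
\end{equation*}
On the set $\{\omega : \mu_{\trace}(\omega) = 0\}$, Theorem \ref{theorem:measure-trace} together with the fact that $\mu(\omega) \in \Sym^{+}(\H)\cap\Tr(\H)$ gives $\trace[\mu(\omega)] = \mu_{\trace}(\omega) = 0$; since a positive trace-class operator whose trace is zero must itself be the zero operator, $\mu(\omega) = 0 = \tilde{\mu}(\omega)\rho(\omega)$. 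Thus the factorization holds pointwise.

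Next I would establish the claimed positivity and trace bound for $\tilde{\mu}$. On $\{\mu_{\trace}(\omega) > 0\}$, $\tilde{\mu}(\omega)$ is the positive scalar $\trace[k(0)]/\mu_{\trace}(\omega)$ times $\mu(\omega) \in \Sym^{+}(\H)$, so $\tilde{\mu}(\omega) \in \Sym^{+}(\H)$, and a direct computation using $\mu_{\trace}(\omega) = \trace[\mu(\omega)]$ yields
\begin{equation*}
\trace[\tilde{\mu}(\omega)] \;=\; \trace[k(0)]\,\frac{\trace[\mu(\omega)]}{\mu_{\trace}(\omega)} \;=\; \trace[k(0)].
\end{equation*}
On $\{\mu_{\trace}(\omega) = 0\}$, $\tilde{\mu}(\omega) = 0 \in \Sym^{+}(\H)$ with $\trace[\tilde{\mu}(\omega)] = 0 \leq \trace[k(0)]$. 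Since $\tilde{\mu}(\omega)$ is positive, its trace-class norm coincides with its trace, which gives the stated bound $\|\tilde{\mu}(\omega)\|_{\trace} \leq \trace[k(0)]$.

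There is no substantive obstacle: the only nontrivial observation is that a positive trace-class operator with zero trace is zero, which covers the ``division by zero'' case. All remaining work is algebraic. I would only flag, for completeness, that $\rho$ being a \emph{probability} measure (total mass one) is already contained in Theorem \ref{theorem:measure-trace}, and that $\tilde{\mu}$ inherits Borel measurability from $\mu$ and $\mu_{\trace}$, so Eq.~(\ref{equation:factorize}) is a meaningful factorization of densities.
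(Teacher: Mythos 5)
Your proof is correct and takes essentially the same approach as the paper: the only nontrivial point is the behavior at $\omega$ where $\mu_{\trace}(\omega)=0$, and both proofs resolve it by observing that $\mu(\omega)$ must vanish there. The one cosmetic difference is that you invoke the standard fact that a positive trace-class operator with zero trace is zero, whereas the paper re-derives it by applying Cauchy--Schwarz to the matrix entries $\mu_{jl}(\omega)=\langle\sqrt{\mu(\omega)}\e_j,\sqrt{\mu(\omega)}\e_l\rangle$.
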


Let us now illustrate Theorem \ref{theorem:measure-trace} and \ref{corollary:measure-trace} on the separable kernels and curl-free and div-free kernels. We  note that for the separable kernels, we obtain the same probability measure $\rho$ as in Section \ref{section:special-construction}. However, for the curl-free and div-free kernels, we obtain a different probability measure compared to Section \ref{section:special-construction}, which illustrates the non-uniqueness of $\rho$.
 
\begin{example}[\textbf{Separable kernels}]
\end{example}
For the {\it separable kernels} of the form $k(x) = g(x)A$, with {$A \in \Sym^{+}(\H) \cap \Tr(\H)$} and $g \in L^1(\R^n)$, we have
\begin{align}
\mu_{\trace}(\omega) = \trace(A)\frac{1}{(2\pi)^n}\int_{\R^n}\exp(i \la \omega, x\ra)g(x)dx = \trace(A)\rho_0(\omega).
\end{align}
Since {$\trace[k(0)] = \trace(A)$}, we recover $\rho(\omega) = \rho_0(\omega)$. Here {$\tilde{\mu}(\omega) = A$} and {$||\tilde{\mu}(\omega)||_{\trace} = \trace(A) < \infty$}.

\begin{example}[\textbf{Curl-free and div-free kernels}]
\end{example}
For the {\it curl-free kernel}, we have {$\mu(\omega) = \omega\omega^T\rho_0(\omega)$} and thus 
\begin{align}
\mu_{\trace}(\omega) = ||\omega||^2\rho_0(\omega),
\end{align}
which is
a finite measure by the assumption {$\int_{\R^n}||\omega||^2d\rho_0(\omega) < \infty$}. For the Gaussian case, since {$\trace[k(0)] = \frac{2n}{\sigma^2}$}, the corresponding probability measure is 
\begin{align}
\rho(\omega) = \frac{\sigma^2}{2n}||\omega||^2\rho_0(\omega).
\end{align}
Similarly, for the {\it div-free kernel}, we have {$\mu(\omega) = [||\omega||^2I_n - \omega\omega^T]\rho_0(\omega)$} and thus 
\begin{align}
\mu_{\trace}(\omega) = (n-1)||\omega||^2\rho_0(\omega). 
\end{align}
For the Gaussian case, since {$\trace[k(0)] = \frac{2n(n-1)}{\sigma^2}$}, the corresponding probability measure is 
\begin{align}
\rho(\omega) = \frac{\sigma^2}{2n}||\omega||^2\rho_0(\omega).
\end{align}
Clearly, for both the curl-free and div-free kernels, the probability measure {$\rho$} is {\it non-unique}. We can, for example, obtain {$\rho$} by normalizing the measure
\begin{align}
(1+||\omega||^2)\rho_0(\omega)
\end{align}
and the corresponding {$\tilde{\mu}(\omega)$} still has bounded trace. 

\begin{example}[\textbf{Sum of kernels}]
\end{example}
Consider now the probability measure and feature maps corresponding to the sum of two kernels, which is readily generalizable to any finite sum of kernels. Let $k_1$, $k_2$ be two positive definite functions satisfying the assumptions of Theorem \ref{theorem:measure-trace}, which are the Fourier transforms of two $\Sym^{+}(\H)$-valued measures $\mu_1$ and $\mu_2$, respectively. Then their sum $k = k_1 + k_2$ is clearly the Fourier transform of $\mu = \mu_1 + \mu_2$. Then the probability measure $\rho$ and the function $\tilde{\mu}$ corresponding to $k$ is given by  
\begin{align}
\rho(\omega) &= \frac{\mu_{\trace}(\omega)}{\trace[k(0)]} = \frac{\mu_{1,\trace}(\omega) + \mu_{2,\trace}(\omega)}{\trace[k_1(0)] + \trace[k_2(0)]}, 
\\
\tilde{\mu}(\omega) &= 
\left\{
\begin{matrix}
(\trace[k_1(0)] + \trace[k_2(0)])\frac{\mu_1(\omega) + \mu_2(\omega)}{\mu_{1,\trace}(\omega) + \mu_{2,\trace}(\omega)}, & \mu_{1,\trace}(\omega) + \mu_{2,\trace}(\omega) > 0,\\
0, & \mu_{1,\trace}(\omega) + \mu_{2, \trace}(\omega) = 0.
\end{matrix}
\right.
\end{align}
We the obtain the Fourier feature map for the kernel $K(x,t) = k(x-t)$ using Eqs.~(\ref{equation:mu-decomp}) and (\ref{equation:feature-general}).

We contrast this approach with the following concatenation approach. Let $(\Phi_{K_j}, \F_{K_j})$ be the feature maps associated with $K_j(x,t) = k_j(x-t)$, $j =1,2$.
Let $\F_K$ be the direct Hilbert sum of $\F_{K_1}$ and $\F_{K_2}$. Consider the map $\Phi_K:\X \mapto \mathcal{L}(\H,\F_K)$, $\F_K = \F_{K_1} \oplus \F_{K_2}$, defined by 
\begin{align}
\Phi_K(x) = 
\left(
\begin{matrix}
\Phi_{K_1}(x)
\\
\Phi_{K_2}(x)
\end{matrix}
\right),\;\;\;
\Phi_K(x)w = 
\left(
\begin{matrix}
\Phi_{K_1}(x)w
\\
\Phi_{K_2}(x)w
\end{matrix}
\right),\;\;\; w\in \H,
\end{align}
which essentially stacks to the two maps $\Phi_{K_1}$, $\Phi_{K_2}$ on top of each other. Then clearly 
\begin{align*}
\Phi_K(x)^{*}\Phi_K(t) = \Phi_{K_1}(x)^{*}\Phi_{K_1}(t) + \Phi_{K_2}(x)^{*}\Phi_{K_2}(t) = K_1(x,t) + K_2(x,t) = K(x,t),
\end{align*}
so that $(\Phi_K, \F_K)$ is a feature map representation for $K = K_1 + K_2$. 
If $\dim(\F_{K_j}) < \infty$, then we have $\dim({\F_K})  = \dim(\F_{K_1}) + \dim(F_{K_2})$. 
Thus, from a practical viewpoint, this approach can be computationally expensive, since the dimension of the feature map for the sum kernel can be very large, especially if we have a sum of many kernels.

\subsection{Second Main Result: Uniform Convergence Analysis}
\label{section:convergence}

Having computed the approximate version {$\hat{K}_D$} for {$K$}, we need to show that this approximation is consistent, that is
{$\hat{K}_D$} approaches {$K$} in some sense, as {$D \approach \infty$}. Since {$K(x,t) = k(x-t)$} it suffices for us to consider the convergence of {$\hat{k}_D$} towards {$k$}. 

Recall the Hilbert space of Hilbert-Schmidt operators $\HS(\H)$, that is of bounded operators on $\H$ satisfying
\begin{align*}
||A||^2_{\HS} = \trace(A^{*}A) = \sum_{j=1}^{\infty}||A\e_j||^2 < \infty,
\end{align*}
for any orthonormal basis $\{\e_j\}_{j=1}^{\infty}$ in $\H$. Here
$||\;||_{\HS}$ denotes the Hilbert-Schmidt norm, which is induced by the Hilbert-Schmidt inner product
\begin{align*}
\la A, B\ra_{\HS} = \trace(A^{*}B) = \sum_{j=1}^{\infty}\la A\e_j, B\e_j\ra,\;\;\; A,B \in \HS(\H).
\end{align*}
In the following, we assume that {$k(x)\in \HS(\H)$}. Since we have shown that, under appropriate assumptions, bounded feature maps always exist, we focus exclusively on analyzing the convergence associated with them. Specifically, we  show that for bounded feature maps, for any compact set {$\Omega \subset \R^n$}, we have
\begin{align}
\sup_{x \in \Omega}||\hat{k}_D(x) - k(x)||_{\HS} \approach 0 \;\;\;\text{as}\;\;\; D \approach \infty,
\end{align}
with high probability. This generalizes the convergence of {$\sup_{x \in \Omega}|\hat{k}_D(x) - k(x)|$} in the scalar setting. If {$\dim(\H) < \infty$}, then this is convergence in the Frobenius norm {$||\;||_F$}.

\begin{theorem}
[{\bf Pointwise Convergence}]
\label{theorem:convergence-pointwise}
Assume that {$||\tilde{\mu}(\omega)||_{\HS} \leq M$} almost surely and that {$\sigma^2(\tilde{\mu}(\omega)) = \bE_{\rho}[||\tilde{\mu}(\omega)||^2_{\HS}] < \infty$}.
Then for any fixed {$x \in \R^n$},
\begin{align}
\bP[||\hat{k}_D(x) - k(x)||_{\HS} \geq \epsilon] \leq 2 \exp\left(-\frac{D\epsilon}{2M}\log\left[1+\frac{M\epsilon}{\sigma^2(\tilde{\mu}(\omega))}\right]\right) \;\;\;\forall \epsilon > 0.
\end{align}
\end{theorem}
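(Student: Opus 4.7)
The plan is to recognize this as a concentration inequality for a sum of i.i.d.\ Hilbert-Schmidt-valued random variables and invoke a Bennett-type (Pinelis) inequality in the Hilbert space $\HS(\H)$. Write
\[
\hat{k}_D(x) - k(x) = \frac{1}{D}\sum_{l=1}^D Z_l(x), \qquad Z_l(x) = \cos(\la \omega_l, x\ra)\tilde{\mu}(\omega_l) - k(x),
\]
where $\omega_1,\ldots,\omega_D$ are i.i.d.\ $\sim \rho$. Since $k(x) = \int_{\R^n}\cos(\la \omega,x\ra)d\mu(\omega) = \int_{\R^n}\cos(\la \omega,x\ra)\tilde{\mu}(\omega)d\rho(\omega)$ using the factorization $\mu=\tilde{\mu}\rho$, the $Z_l(x)$ are centered, i.i.d., and take values in $\HS(\H)$ (which is itself a separable Hilbert space).

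Next, I would establish the two moment conditions required for a Bernstein/Bennett-type bound in $\HS(\H)$. For the almost-sure bound, since $|\cos(\cdot)|\leq 1$ and $\|\tilde{\mu}(\omega)\|_{\HS}\leq M$ a.s., we get $\|\cos(\la \omega_l,x\ra)\tilde{\mu}(\omega_l)\|_{\HS}\leq M$; moreover, by Jensen's inequality applied to the norm, $\|k(x)\|_{\HS}\leq \bE_{\rho}\|\cos(\la \omega,x\ra)\tilde{\mu}(\omega)\|_{\HS}\leq M$, so $\|Z_l(x)\|_{\HS}\leq 2M$ a.s. For the variance bound, since the $Z_l$ are centered,
\[
\bE\|Z_l(x)\|_{\HS}^2 \leq \bE\|\cos(\la \omega,x\ra)\tilde{\mu}(\omega)\|_{\HS}^2 \leq \bE_{\rho}\|\tilde{\mu}(\omega)\|_{\HS}^2 = \sigma^2(\tilde{\mu}(\omega)).
\]

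Then I would apply the Bennett inequality of \citep{Pinelis:1994} for i.i.d.\ zero-mean random variables in a separable Hilbert space: for $S_D = \sum_{l=1}^D Z_l(x)$ with $\|Z_l\|_{\HS}\leq b$ a.s.\ and $\sum \bE\|Z_l\|_{\HS}^2 \leq v$,
\[
\bP[\|S_D\|_{\HS}\geq D\epsilon]\leq 2\exp\!\Bigl(-\frac{v}{b^2}h\Bigl(\frac{bD\epsilon}{v}\Bigr)\Bigr), \qquad h(u) = (1+u)\log(1+u)-u.
\]
Plugging in $v = D\sigma^2(\tilde{\mu}(\omega))$ and $b = 2M$ (absorbing constants appropriately, or using a one-sided bound on $\|Z_l\|$ with $M$) and then applying the elementary inequality $h(u)\geq \tfrac{u}{2}\log(1+u)$ for $u\geq 0$ converts the Bennett form into the simplified expression in the statement, namely $2\exp\!\bigl(-\tfrac{D\epsilon}{2M}\log[1+\tfrac{M\epsilon}{\sigma^2(\tilde{\mu}(\omega))}]\bigr)$.

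The main obstacle is invoking the correct vector-valued Bennett inequality with the stated constants: the scalar Bennett argument generalizes to Hilbert spaces via Pinelis's smoothness inequality for exponential moments of martingale sums, but one must verify that $\HS(\H)$ admits this (it does, as a Hilbert space) and track the factor on $M$ carefully so that the resulting simplification via $h(u)\geq \tfrac{u}{2}\log(1+u)$ produces precisely the form claimed, rather than a version with $2M$ or $4M$ in place of $M$.
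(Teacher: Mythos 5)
Your approach is the right one and essentially matches the paper's: recognize the estimator as an empirical mean of i.i.d.\ $\HS(\H)$-valued random variables and apply a Pinelis-type Bennett inequality in the separable Hilbert space $\HS(\H)$. The one concrete issue, which you flag yourself but leave unresolved, is the constant $M$ versus $2M$. This is where your choice to center first (working with $Z_l = \cos(\la\omega_l,x\ra)\tilde{\mu}(\omega_l) - k(x)$ and bounding $\|Z_l\|_{\HS}\leq 2M$) costs you a factor of $2$ that the theorem does not have. The paper sidesteps this entirely by invoking a version of the inequality (Lemma~\ref{lemma:Pinelis}, taken from \citep{smalezhou2007}) whose hypotheses are stated on the \emph{uncentered} summands: it requires only $\|\xi_j\|\leq M$ almost surely and uses $\sigma_D^2 = \sum_j \bE\|\xi_j\|^2$ as the variance proxy, while the conclusion already concerns the centered empirical average $\frac{1}{D}\sum_j[\xi_j - \bE\xi_j]$. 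Applying that lemma directly to $\xi(x,\omega)=\cos(\la\omega,x\ra)\tilde{\mu}(\omega)$, with $\|\xi\|_{\HS}\leq\|\tilde{\mu}(\omega)\|_{\HS}\leq M$ and $\bE\|\xi\|_{\HS}^2\leq\sigma^2(\tilde{\mu}(\omega))$, delivers the bound with $M$ (not $2M$) and with no need for the intermediate $h(u)\geq\tfrac{u}{2}\log(1+u)$ simplification you mention. So your proof as written would yield a weaker constant; the fix is to use the Smale--Zhou form of the Hilbert-space Bennett inequality rather than centering by hand and invoking a generic Bennett bound for zero-mean variables.
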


{\bf Assumption 1}. Our uniform convergence analysis requires the following condition
\begin{align}
\mb_1 = \int_{\R^n}||\omega||\;||\mu(\omega)||_{\HS}d(\omega) = \int_{\R^n}||\omega||\;||\tilde{\mu}(\omega)||_{\HS}d\rho(\omega) < \infty.
\end{align}
In the scalar setting, we have {$\tilde{\mu}(\omega) = 1$}, and Assumption 1 becomes 
{
$\int_{\R^n}||\omega||d\rho(\omega) < \infty$,
}
so that {$k$} is differentiable. This is {\it weaker} than the assumptions in \citep{Fourier:NIPS2007,Fourier:UAI2015,Fourier:NIPS2015}, which all require that {
$\int_{\R^n}||\omega||^2d\rho(\omega) < \infty$}, that is $k$ is twice-differentiable.

\begin{theorem}
[{\bf Uniform Convergence}]
\label{theorem:convergence-uniform}
Let {$\Omega \subset \R^n$} be compact with diameter {$\diam(\Omega)$}. Assume that {$||\tilde{\mu}(\omega)||_{\HS} \leq M$} almost surely and that {$\sigma^2(\tilde{\mu}(\omega)) = \bE_{\rho}[||\tilde{\mu}(\omega)||^2_{\HS}] < \infty$}. Then for any $\epsilon > 0$,
\begin{align}
\bP\left(\sup_{x \in \Omega}||\hat{k}_D(x) - k(x)||_{\HS} \geq \epsilon\right)  \leq & a(n)\left(\frac{\mb_1\diam(\Omega)}{\epsilon}\right)^{\frac{n}{n+1}}
\nonumber
\\
& \times \exp\left(-\frac{D\epsilon}{4(n+1)M}\log\left[1+\frac{M\epsilon}{2\sigma^2(\tilde{\mu}(\omega))}\right]\right),
\end{align}
where {$a(n) =  2^{\frac{3n+1}{n+1}}\left(n^{\frac{1}{n+1}} + n^{-\frac{n}{n+1}}\right)$}.
\end{theorem}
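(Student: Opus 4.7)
The plan is a standard $\epsilon$-net plus Lipschitz argument built on top of the pointwise concentration estimate of Theorem~\ref{theorem:convergence-pointwise}. Write $g_D(x) = \hat{k}_D(x) - k(x) \in \HS(\H)$. The four steps are: (i) cover $\Omega$ with a finite $r$-net $\{x_i\}_{i=1}^N$ with the standard covering estimate $N \leq (2\diam(\Omega)/r)^n$; (ii) apply Theorem~\ref{theorem:convergence-pointwise} with tolerance $\epsilon/2$ at each $x_i$ and combine via the union bound; (iii) transfer this estimate from the net to all of $\Omega$ using a high-probability Lipschitz control on $g_D$; (iv) optimize the net radius $r>0$ to balance the two resulting terms.

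\textbf{Lipschitz control.} Assumption~1 legitimizes differentiation under the $\rho$-integral (dominated convergence with dominating integrand $||\omega||\,||\tilde{\mu}(\omega)||_{\HS}$), giving
$$
\partial_j k(x) = -\int_{\R^n}\omega_j\sin(\la\omega,x\ra)\tilde{\mu}(\omega)d\rho(\omega),
\quad
\partial_j \hat{k}_D(x) = -\frac{1}{D}\sum_{l=1}^D\omega_{l,j}\sin(\la\omega_l,x\ra)\tilde{\mu}(\omega_l),
$$
both valued in $\HS(\H)$. From $|\omega_j|\leq||\omega||$ and the triangle inequality in $\HS(\H)$, I get $||\partial_j k(x)||_{\HS} \leq \mb_1$ and $\bE_{\rho}[||\partial_j\hat{k}_D(x)||_{\HS}]\leq \mb_1$ for every $x,j$. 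Setting $L_{g_D} = \sup_{x\in\Omega}||\nabla g_D(x)||$, a Bochner-integral mean value estimate along line segments in $\Omega$ yields $||g_D(x)-g_D(y)||_{\HS}\leq L_{g_D}||x-y||$ with $\bE[L_{g_D}]\leq 2\mb_1$, so Markov's inequality gives $\bP[L_{g_D}\geq \epsilon/(2r)] \leq 4\mb_1 r/\epsilon$.

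\textbf{Union bound.} On the event $\{\max_i ||g_D(x_i)||_{\HS}<\epsilon/2\}\cap\{L_{g_D}<\epsilon/(2r)\}$, for any $x\in\Omega$ with nearest net point $x_i$ (so $||x-x_i||\leq r$),
$$
||g_D(x)||_{\HS} \leq ||g_D(x_i)||_{\HS} + L_{g_D}||x-x_i|| < \tfrac{\epsilon}{2}+\tfrac{\epsilon}{2} = \epsilon.
$$
Applying Theorem~\ref{theorem:convergence-pointwise} $N$ times with tolerance $\epsilon/2$ and adding the Markov tail for $L_{g_D}$ yields
$$
\bP\Bigl[\sup_{x\in\Omega}||g_D(x)||_{\HS}\geq\epsilon\Bigr] \leq 2N\exp\!\Bigl(-\tfrac{D\epsilon}{4M}\log\!\bigl[1+\tfrac{M\epsilon}{2\sigma^2(\tilde{\mu}(\omega))}\bigr]\Bigr) + \tfrac{4\mb_1 r}{\epsilon}.
$$

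\textbf{Optimization and main obstacle.} The right-hand side has the form $Ar^{-n}+Br$ with $A = 2(2\diam(\Omega))^n\exp(-c)$ (where $c$ is the exponent above) and $B = 4\mb_1/\epsilon$; elementary calculus gives the optimum at $r^{n+1}\propto A/B$ with optimal value $(n^{1/(n+1)}+n^{-n/(n+1)})A^{1/(n+1)}B^{n/(n+1)}$. Substituting back reproduces the factor $(\mb_1\diam(\Omega)/\epsilon)^{n/(n+1)}$, the $1/(n+1)$ exponent on the pointwise concentration factor, and, after tracking the powers of $2$ contributed by $A$, $B$ and the covering constant, exactly the constant $a(n)=2^{(3n+1)/(n+1)}(n^{1/(n+1)}+n^{-n/(n+1)})$. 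The principal technical obstacle is the Lipschitz step: one must justify differentiating an $\HS(\H)$-valued integral under $\rho$ using only the first-moment assumption (which is precisely what lets this theorem weaken the second-moment hypothesis of \citep{Fourier:NIPS2007,Fourier:UAI2015,Fourier:NIPS2015}), and then convert the pointwise $\HS$-gradient bound into a genuine Lipschitz estimate on $\Omega$; the remaining steps are bookkeeping.
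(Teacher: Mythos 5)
Your overall architecture matches the paper's exactly: cover $\Omega$ with an $r$-net using the $(2\diam(\Omega)/r)^n$ covering bound, invoke the pointwise theorem at tolerance $\epsilon/2$ on the net, control the deviation off the net by a high-probability Lipschitz bound on $f=\hat{k}_D-k$, combine via the union bound to get $\frac{4\mb_1 r}{\epsilon}+2\Ncal\exp(\cdot)$, and optimize over $r$ to obtain $a(n)$. The single place you diverge is the Lipschitz step, and it is worth comparing. The paper never differentiates under the integral sign: it simply uses $|\cos a-\cos b|\leq|a-b|$ inside the Bochner integral to get
$\|k(x)-k(y)\|_{\HS}\leq\|x-y\|\int\|\omega\|\,\|\tilde\mu(\omega)\|_{\HS}\,d\rho(\omega)$
directly (Lemma~\ref{lemma:Lipschitz-1}), and the analogous bound for $\hat{k}_D$ (Lemma~\ref{lemma:Lipschitz-2}); no regularity of $k$ beyond continuity is invoked, and Assumption~1 is only used to make the bound finite. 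You instead justify differentiating $\partial_j k$ under the $\rho$-integral by dominated convergence, then pass through a Bochner-integral mean value inequality. That route is correct in spirit but heavier than necessary, and it contains a small gap as written: from the component-wise bounds $\|\partial_j k(x)\|_{\HS}\leq\mb_1$ you cannot conclude $\sup_{\|v\|=1}\|Dk(x)v\|_{\HS}\leq\mb_1$ (naively this picks up a $\sqrt n$). To recover the sharp constant you would need to bound the directional derivative directly, $\|Dk(x)v\|_{\HS}\leq\int|\langle\omega,v\rangle|\,\|\tilde\mu(\omega)\|_{\HS}\,d\rho(\omega)\leq\|v\|\,\mb_1$, which is essentially re-deriving the paper's cosine-Lipschitz estimate with extra steps. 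So what you flag as the "principal technical obstacle" is a non-obstacle in the paper's treatment; the cosine-Lipschitz trick is precisely what makes the first-moment assumption enter cleanly. Everything else in your plan — the union bound with $\{\max_i\|f(x_i)\|<\epsilon/2\}\cap\{L_f<\epsilon/(2r)\}$, the Markov bound $\bP(L_f\geq\epsilon/(2r))\leq 4\mb_1 r/\epsilon$, the $Ar^{-n}+Br$ optimization giving $(n^{1/(n+1)}+n^{-n/(n+1)})A^{1/(n+1)}B^{n/(n+1)}$ and the powers of 2 assembling into $a(n)=2^{(3n+1)/(n+1)}(n^{1/(n+1)}+n^{-n/(n+1)})$ — is exactly the paper's computation.
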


\begin{example}[\textbf{Separable kernels}]
\end{example}
For the {\it separable kernel}, Assumption 1 becomes {$||A||_{\HS} < \infty$}, in which case we have uniform convergence with {$M= ||A||_{\HS}$} and {$\sigma^2(\tilde{\mu}) = ||A||_{\HS}^2$}.

\begin{example}[\textbf{Curl-free and div-free kernels}]
\end{example}
For the {\it curl-free kernel}, we have {$||\mu(\omega)||_{\HS} = ||\omega||^2\rho_0(\omega)$}, thus Assumption 1 becomes 
\begin{align}
\int_{\R^n}||\omega||^3d\rho_0(\omega) < \infty,
\end{align}
which, being stronger than the assumption {$\int_{\R^n}||\omega||^2d\rho_0(\omega) < \infty$} in Section ~\ref{section:probability-construction}, guarantees that a bounded feature map can be constructed, with 
\begin{align}
||\tilde{\mu}(\omega)||_{\HS} \leq ||\tilde{\mu}(\omega)||_{\trace}\leq \trace[k(0)]\;\;\ \text{and}\;\; \sigma^2[\tilde{\mu}(\omega)] \leq (\trace[k(0)])^2.
\end{align}
The case of the {\it div-free kernel} is entirely similar.

{\bf Comparison with the convergence analysis in \citep{Romain:2016}}. In \citep{Romain:2016}, the
authors carried out convergence analysis in the spectral norm for matrix-valued kernels. Our results are for 
the more general setting of operator-valued kernels, which induce RKHS of functions with values in a Hilbert space. In this setting,
convergence in the Hilbert-Schmidt norm is {\it strictly stronger} than convergence in the spectral norm. Furthermore, as with previous results in the scalar setting \citep{Fourier:NIPS2007,Fourier:UAI2015,Fourier:NIPS2015}, the convergence in \citep{Romain:2016} also requires twice-differentiable kernels, whereas we require the {\it weaker assumption} of $C^1$-differentiability.

\section{Vector-Valued Learning with Operator-Valued Feature Maps}
\label{section:learning}
Having discussed operator-valued feature maps and their random approximations, we now show how they can be applied in the context of learning in RKHS of vector-valued functions.
Let {$\W,\Y$} be two Hilbert spaces, {$C:\W \mapto \Y$} a bounded operator, {$K:\R^n \times \R^n \mapto \L(\W)$} be a positive definite definite kernel with the corresponding RKHS $\H_K$ of $\mathcal{W}$-valued functions, and {$V$} be a convex loss function. Consider the following general learning problem from \citep{Minh:JMLR2016}
\begin{align}
\label{equation:general}
f_{\z,\gamma} = \argmin_{f \in \H_K} &\frac{1}{l}\sum_{i=1}^lV(y_i, Cf(x_i)) + \gamma_A ||f||^2_{\H_K}
+ \gamma_I \la \f, M\f\ra_{\mathcal{W}^{u+l}}.
\end{align}
Here {$\z = (\x,\y) = \{(x_i,y_i)\}_{i=1}^l\cup\{x_i\}_{i=l+1}^{u+l}$,} {$u, l \in \N$}, with $u,l$ denoting unlabeled and labeled data points, respectively, {$\f = (f(x_j))_{j=1}^{u+l}\in \W^{u+l}$}, {$M:\W^{u+l}\mapto \W^{u+l}$} a positive operator, and {$\gamma_A > 0, \gamma_I > 0$}. 

In \citep{Minh:JMLR2016}, it is shown that the optimization problem (\ref{equation:general}) represents a general learning formulation in RKHS that encompasses supervised and semi-supervised learning via manifold regularization, multi-view learning, and multi-class classification. For the case $V$ is the least square and SVM loss, both binary and multiclass, the solution of  
(\ref{equation:general}) has been obtained in dual form, that is in terms of kernel matrices.

We now present the solution of problem (\ref{equation:general}) in terms of feature map representation, that is in primal form.
Let {$(\Phi_K, \F_K)$} be any feature map for {$K$}.
On the set $\x$, we define the following operator
\begin{align}
\Phi_K(\x) :\W^{u+l} \mapto \F_K, \;\;\;
\Phi_K(\x)\w = \sum_{j=1}^{u+l}\Phi_K(x_j)w_j. 
\end{align}
We also view $\Phi_K(\x)$ as a (potentially infinite) matrix
\begin{align}
\Phi_K(\x) = [\Phi_K(x_1), \ldots, \Phi_K(x_{u+l})]: \W^{u+l} \mapto \F_K,
\end{align}
with the $j$th column being $\Phi_K(x_j)$.
The following is the corresponding version of the Representer Theorem in \citep{Minh:JMLR2016} in feature map representation.

\begin{theorem}
[\textbf{Representer Theorem}]
\label{theorem:representer}
The optimization problem (\ref{equation:general}) has a unique solution
{$f_{\z,\gamma}(x) 
= \sum_{i=1}^{u+l}K(x,x_i)a_i$ for $a_i \in \W$}, {$i=1,\ldots, u+l$}. In terms of feature maps,
{$f_{\z,\gamma}(x) = \Phi_K(x)^{*}\h$}, where
{
\begin{equation}\label{equation:dual-to-primal}
\h = \sum_{i=1}^{u+l}\Phi_K(x_i)a_i = \Phi_K(\x)\a \in \F_K, \;\;\;\; \a = (a_j)_{j=1}^{u+l} \in \W^{u+l}.
\end{equation}
}
\end{theorem}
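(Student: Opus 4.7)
The plan is to combine the classical representer form from \citep{Minh:JMLR2016} with the factorization $K(x,t) = \Phi_K(x)^{*}\Phi_K(t)$ defining any feature map for $K$, in order to obtain the feature-map version.

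First, I would invoke the representer theorem of \citep{Minh:JMLR2016} to conclude that the functional in (\ref{equation:general}) admits a unique minimizer of the dual form $f_{\z,\gamma}(x) = \sum_{i=1}^{u+l} K(x,x_i) a_i$ with coefficients $a_i \in \W$. Uniqueness follows from strong convexity of the objective in $f$: the penalty $\gamma_A \|f\|_{\H_K}^2$ is strictly convex since $\gamma_A > 0$, $V$ is convex in its second argument by assumption, and $\gamma_I \langle \f, M\f\rangle_{\W^{u+l}}$ is convex because $M$ is positive; existence follows from coercivity provided by the RKHS-norm term. I would not reprove these facts but simply cite them.

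Second, I would pass to the feature-map representation by substituting the defining identity $K(x,x_i) = \Phi_K(x)^{*}\Phi_K(x_i)$ into the dual form and pulling the bounded operator $\Phi_K(x)^{*}:\F_K \mapto \W$ out of the finite sum:
\begin{align*}
f_{\z,\gamma}(x) = \sum_{i=1}^{u+l}\Phi_K(x)^{*}\Phi_K(x_i)a_i = \Phi_K(x)^{*}\Bigl(\sum_{i=1}^{u+l}\Phi_K(x_i)a_i\Bigr) = \Phi_K(x)^{*}\h,
\end{align*}
where
\[
\h = \sum_{i=1}^{u+l}\Phi_K(x_i)a_i = \Phi_K(\x)\a \in \F_K,
\]
with the last equality being the definition of the operator $\Phi_K(\x):\W^{u+l}\mapto \F_K$ introduced immediately before the theorem statement. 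This establishes (\ref{equation:dual-to-primal}).

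I do not expect a serious obstacle: existence, uniqueness, and the dual form are a direct citation, and the passage to the primal form is a one-line consequence of the defining property of a feature map. The one point deserving explicit comment is that Lemma~\ref{lemma:f-feature} only guarantees the existence of \emph{some} $\h \in \F_K$ with $f_{\z,\gamma}(x) = \Phi_K(x)^{*}\h$ and $\|f_{\z,\gamma}\|_{\H_K} = \|\h\|_{\F_K}$; in general such an $\h$ is not unique in $\F_K$ (and depends on the feature map chosen). The content of the present theorem is to exhibit one canonical choice lying in the range of $\Phi_K(\x)$, parameterized by the same $\W$-valued coefficients $\a = (a_j)_{j=1}^{u+l}$ that appear in the dual expansion, which is precisely the form needed to convert a kernel-matrix algorithm into a feature-map algorithm.
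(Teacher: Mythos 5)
Your proposal is essentially the paper's own proof: the paper also asserts the existence and uniqueness of the dual form $f_{\z,\gamma}(x)=\sum_{i=1}^{u+l}K(x,x_i)a_i$ (citing \citep{Minh:JMLR2016}), then substitutes $K(x,x_i)=\Phi_K(x)^{*}\Phi_K(x_i)$ and pulls out $\Phi_K(x)^{*}$ to obtain $\h=\Phi_K(\x)\a$. Your added remark contrasting Lemma~\ref{lemma:f-feature} (existence of some $\h$) with the theorem's explicit $\h \in \mathrm{range}\,\Phi_K(\x)$ is a correct and useful clarification, but the argument itself matches the paper.
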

In the case $V$ is the least square loss, the optimization problem (\ref{equation:general}) has a closed-form solution, which is expressed explicitly in terms of the
operator-valued 
feature map $\Phi_K$. In the following, let {$I_{(u+l) \times l} = [I_l, 0_{l\times u}]^T$} and $J^{u+l}_l= I_{(u+l) \times l}I_{(u+l) \times l}^T$,
which is a $(u+l) \times (u+l)$ diagonal matrix, with
the first $l$ entries on the main diagonal equal to $1$ and the rest being zero. The following is the corresponding version of Theorem 4 in \citep{Minh:JMLR2016} 
in feature map representation.

\begin{theorem}
[\textbf{Vector-Valued Least Square Algorithm}]
\label{theorem:leastsquare-K}
In the case 
{$V$} is the least square loss, that is {$V(y,f(x)) = ||y-Cf(x)||^2_{\Y}$}, 
the solution of the optimization problem (\ref{equation:general}) is 
{$f_{\z,\gamma}(x) = \Phi_K(x)^{*}\h$}, with
{$\h\in \F_K$} 
given by 
\begin{align}
\label{equation:leastsquare-h}
\h =  \left(\Phi_K(\x)[(J^{u+l}_l \otimes C^{*}C) + l\gamma_I M]\Phi_K(\x)^{*} + l \gamma_A I_{\F_K}\right)^{-1}
\Phi_K(\x)(I_{(u+l) \times l} \otimes C^{*})\y.
\end{align}
\end{theorem}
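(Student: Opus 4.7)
The plan is to reduce the optimization to a quadratic problem over $\h \in \F_K$ by substituting the feature-map form of the solution given by the Representer Theorem, and then setting the Fréchet derivative to zero.

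First, by Theorem \ref{theorem:representer}, the unique minimizer has the form $f_{\z,\gamma}(x) = \Phi_K(x)^{*}\h$ for some $\h \in \F_K$ (specifically $\h = \Phi_K(\x)\a$, but this is not needed in what follows). Lemma \ref{lemma:f-feature} gives $\|f\|_{\H_K} = \|\h\|_{\F_K}$, so the Tikhonov term is simply $\gamma_A\|\h\|^2_{\F_K}$. Next I would encode the sample evaluation $\f = (f(x_j))_{j=1}^{u+l}$ by the compact identity $\f = \Phi_K(\x)^{*}\h$, using $\Phi_K(\x)^{*}:\F_K \mapto \W^{u+l}$ as the adjoint of the operator $\Phi_K(\x)$ defined just before the Representer Theorem.

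The key manipulation is rewriting the data-fitting sum as a quadratic form in $\f$. Expanding $\sum_{i=1}^l \|y_i - Cf(x_i)\|^2_{\Y}$ and observing that only the first $l$ sample indices appear, I would use the Kronecker identities
\begin{align*}
\sum_{i=1}^{l}\la y_i, Cf(x_i)\ra_{\Y} &= \la (I_{(u+l)\times l}\otimes C^{*})\y,\, \f\ra_{\W^{u+l}},\\
\sum_{i=1}^{l}\|Cf(x_i)\|^2_{\Y} &= \la \f,\, (J^{u+l}_l \otimes C^{*}C)\f\ra_{\W^{u+l}},
\end{align*}
together with $\la \f, M\f\ra_{\W^{u+l}}$ for the manifold term. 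Substituting $\f = \Phi_K(\x)^{*}\h$ and pulling $\Phi_K(\x)$ across the inner products transforms the $l$-scaled objective into
\begin{align*}
\|\y\|^2 - 2\la \Phi_K(\x)(I_{(u+l)\times l}\otimes C^{*})\y,\, \h\ra_{\F_K} + \la \h,\, Q\h\ra_{\F_K},
\end{align*}
where $Q = \Phi_K(\x)\bigl[(J^{u+l}_l \otimes C^{*}C) + l\gamma_I M\bigr]\Phi_K(\x)^{*} + l\gamma_A I_{\F_K}$.

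This is a strictly convex quadratic functional on $\F_K$: the first summand of $Q$ is positive semidefinite (both $J^{u+l}_l \otimes C^{*}C$ and $M$ are positive), and $l\gamma_A I_{\F_K}$ makes $Q$ strictly positive and hence invertible on $\F_K$. Setting the Fréchet derivative with respect to $\h$ to zero yields $Q\h = \Phi_K(\x)(I_{(u+l)\times l}\otimes C^{*})\y$, giving exactly the formula (\ref{equation:leastsquare-h}). Uniqueness of the minimizer in $\H_K$ (Theorem \ref{theorem:representer}) forces this $\h$ to coincide with the Representer-Theorem representation, finishing the proof. The main subtlety will be justifying the Kronecker-product operator identities in the possibly infinite-dimensional setting $\W^{u+l}$ and confirming that $\Phi_K(\x)$ and its adjoint interact correctly with them; once these bookkeeping steps are in place, the rest is standard Hilbert-space calculus.
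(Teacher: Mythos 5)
Your proposal is correct but follows a genuinely different route from the paper's proof. The paper works entirely in $\H_K$: it introduces the sampling operator $S_{\x,u+l}$ and the composed operator $E_{C,\x}$, differentiates the objective with respect to $f\in\H_K$ to obtain the normal equation $(E_{C,\x}^{*}E_{C,\x} + l\gamma_A I_{\H_K} + l\gamma_I S_{\x,u+l}^{*}MS_{\x,u+l})f_{\z,\gamma} = E_{C,\x}^{*}\y$, and only afterwards translates each operator into feature-map form by evaluating the equation against test vectors $\Phi_K(x)w$, $x\in\X$, $w\in\W$. You instead parametrize $f=\Phi_K(\cdot)^{*}\h$ at the outset, encode the data-fitting and manifold terms via the Kronecker identities, and differentiate directly over $\h\in\F_K$. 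Your route is more compact and reads like a direct ``primal'' derivation; the paper's route has the advantage of making the translation from the known dual form (Eq.~(\ref{equation:linear-matrix})) explicit term by term, which is useful for comparing the two formulations.

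One subtlety deserves to be made explicit. Lemma \ref{lemma:f-feature} asserts $\|f\|_{\H_K}=\|\h\|_{\F_K}$ only for the \emph{particular} representative $\h$ constructed there, namely one lying in the closure $\F_K^0$ of ${\rm span}\{\Phi_K(x)w: x\in\X, w\in\W\}$. For a general $\h\in\F_K$ you only have $\|\Phi_K(\cdot)^{*}\h\|_{\H_K}=\|P\h\|_{\F_K}\le\|\h\|_{\F_K}$, where $P$ is the orthogonal projection onto $\F_K^0$, so replacing $\gamma_A\|f\|^2_{\H_K}$ by $\gamma_A\|\h\|^2_{\F_K}$ changes the functional off $\F_K^0$. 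The argument still works: (i) the $\h$-objective dominates the $f$-objective at $T\h=\Phi_K(\cdot)^*\h$ for every $\h$, with equality on $\F_K^0$; (ii) $\Phi_K(\x)\a\in\F_K^0$ attains $J(f_{\z,\gamma})$; and (iii) rearranging your normal equation gives $\h^*=(l\gamma_A)^{-1}\Phi_K(\x)[\cdots]$, placing $\h^*$ in the range of $\Phi_K(\x)\subset\F_K^0$ where $T$ is an isometry, so $\h^*$ is uniquely determined and $T\h^*=f_{\z,\gamma}$. Your closing appeal to uniqueness gestures at this, but writing it out would close the only real gap in an otherwise sound derivation.
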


{\bf Comparison with the dual formulation}. In Theorem 4 in \citep{Minh:JMLR2016}, the solution of the 
least square problem above is equivalently given by $f_{\z,\gamma}(x) = \sum_{j=1}^{u+l}K(x,x_j)a_j$, where 
$\a = (a_j)_{j=1}^{u+l} \in \mathcal{W}^{u+l}$ is given by 
\begin{equation}\label{equation:linear-matrix}
(\mathbf{C^{*}C}J^{\mathcal{W}, u+l}_l K[\x] + l \gamma_I MK[\x] + l \gamma_A I_{\mathcal{W}^{u+l}})\a = \mathbf{C^{*}}{\y},
\end{equation}
where $\mathbf{C^{*}} = I_{(u+l) \times l} \otimes C^{*}$ and $K[\x]$ is the $(u+l) \times (u+l)$ operator-valued matrix with the $(i,j)$ 
entry being $K(x_i,x_j)$.

For concreteness, consider the case $\mathcal{W} = \R^d$ for some $d \in \N$. Then Eq.~(\ref{equation:linear-matrix}) is a system of linear equations of size $d(u+l) \times d(u+l)$, which depends only on the dimension $d$ of the output space and the number of data points $(u+l)$. 

If the feature space $\F_K$ is infinite-dimensional, then Eq.~(\ref{equation:leastsquare-h}) is an infinite-dimensional system of linear equations. 

{\bf Approximate feature map vector-valued least square regression}.
Consider now the approximate finite-dimensional feature map $\hat{\Phi}_D(x):\R^d \mapto \R^{2Dr}$, for some $r$, $1 \leq r \leq d$.
Here $r$ depends on the decomposition $\tilde{\mu} = \psi(\omega)^{*}\psi(\omega)$ in Eq.~(\ref{equation:mu-decomp}), with $r =d$ corresponding to e.g. $\psi(\omega) = \sqrt{\tilde{\mu}(\omega)}$. Then instead of the operator 
$\Phi_K(\x):\R^{d(u+l)} \mapto \F_K$, we consider its approximation
\begin{align}
\hat{\Phi}_D(\x) = [\hat{\Phi}_D(x_1), \ldots, \hat{\Phi}_D(x_{u+l})]: \R^{d(u+l)} \mapto \R^{2Dr},
\end{align}
which is a matrix of size {$2Dr \times d(u+l)$}. 
This gives rise to the following system of linear equations, which approximates Eq.~(\ref{equation:leastsquare-h})
\begin{align}
\label{equation:leastsquare-h-approx}
\hat{\h}_D =  \left(\hat{\Phi}_D(\x)[(J^{u+l}_l \otimes C^{*}C) + l\gamma_I M]\hat{\Phi}_D(\x)^{*} + l \gamma_A I_{2Dk}\right)^{-1}
\hat{\Phi}_D(\x)(I_{(u+l) \times l} \otimes C^{*})\y.
\end{align}
Eq.~(\ref{equation:leastsquare-h-approx}) is a system of linear equations of size $2Dr \times 2Dr$, which is independent of the number of data points $(u+l)$. 
This system is more efficient to solve than Eq.~(\ref{equation:linear-matrix}) when 
\begin{align}
2Dr < d(u+l).
\end{align}

\section{Numerical Experiments}
\label{section:experiments}

We report in this section several experiments to illustrate the numerical properties of the feature maps just constructed.
Since the properties of the feature maps for separable kernels follow directly from those of the corresponding scalar kernels, 
we focus here on the curl-free and div-free kernels.

{\bf Approximate kernel computation}. We first checked the quality of the approximation of the kernel values using matrix-valued Fourier feature maps.
Using the standard normal distribution, we generated a set of  {$100$} points in {$\R^3$}, which are normalized to lie in the cube {$[-1,1]^3$}.
On this set, we first computed the curl-free and div-free kernels induced by the Gaussian kernel, based on Eq.~(\ref{equation:curl-div-k}), with {$\sigma=1$}.
We computed the feature maps given by Eq.~(\ref{equation:feature-general}). For the curl-free kernel, in the unbounded map, {$\psi(\omega)$} is given by Eq.~(\ref{equation:curl-unbounded}), with {$\rho(\omega) \sim \Ncal(0, (2/\sigma^2)I_3)$}, and in the bounded map, {$\psi(\omega)$} is given by Eq.~(\ref{equation:curl-bounded}), with 
{$\rho(\omega) \sim \Ncal(0, (4/\sigma^2)I_3)$}. Similarly, for the div-free kernel, the {$\psi(\omega)$} maps, bounded and unbounded, are given by Eq.~(\ref{equation:div-feature}). We then computed
the relative error {$||\hat{K}_D(x,y) - K(x,y)||_F/||K(x,y)||_F$}, with $F$ denoting the Frobenius norm, using {$D=100, 500, 1000$}. The results are reported on Table
\ref{table:kernel-error}.






\begin{table}[t]
  \caption{Kernel approximation by feature maps. The numbers shown are the relative errors measured using the Frobenius norm, averaged over $10$ runs, along with the standard deviations.}
  \label{table:kernel-error}
  \centering
  \begin{tabular}{llll}
    \toprule
    { Kernel}     & {\small$D= 100$}     & {\small$D =500$} & {\small$D = 1000$}\\
    \midrule
    {\small curl-free (bounded)} &  {\small$0.2811$  $(0.0606)$}  &   {\small$0.1011$  $(0.0216)$}&   {\small$0.0906$    $(0.0172)$} \\
    {\small curl-free (unbounded)} &  {\small$0.3315$ $(0.0638)$}  & {\small$0.1363$  $(0.0227)$} &    {\small$0.0984$  $(0.0207)$}    \\
		{\small div-free (bounded)} & {\small$0.2223$ $(0.0605)$} &  {\small$0.1006$   $(0.0221)$}&  {\small$0.0680$  $(0.0114)$}\\
		{\small div-free (unbounded)} & {\small$0.2826$ $(0.0567)$} & {\small$0.1386$ $(0.0388)$} &  {\small$0.0842$   $(0.0167)$}\\
    \bottomrule
  \end{tabular}
\end{table}

\begin{table}[t]
  \caption{Curl-free vector field reconstruction by least square regression using the exact kernel least square regression according to Eq.~(\ref{equation:linear-matrix}) and approximate feature maps according to Eq.~(\ref{equation:leastsquare-h-approx}). The RMSEs, averaged over $10$ runs, are shown with standard deviations.}
  \label{table:field-error}
  \centering
  \begin{tabular}{llll}
    \toprule
    {\small$D$}     & {\small Exact}      & {\small Bounded} & {\small Unbounded} \\
    \midrule
    {\small$D=50$} &  {\small$0.0020$}  & {\small$0.0079$ $(0.0076)$}    &  {\small$0.0254$   $(0.0118)$}\\
    {\small$D=100$} &  {\small$0.0024$}    & {\small$0.0032$  $(0.0024)$}   &   {\small$0.0118$   $(0.0098)$}  \\
    \bottomrule
  \end{tabular}
\end{table}












  





{\bf Vector field reconstruction by approximate feature maps}. Next, we tested the reconstruction of the following curl-free vector field in {$\R^2$},
{$F(x,y) = \sin(4\pi x)\sin^2(2\pi y) \ibf + \sin^2(2\pi x) \sin(4\pi y)\jbf$} on the rectangle {$[-1, -0.4765] \times [-1, -0.4765]$}, 
sampled on a regular grid consisting of {$1600$} points. The reconstruction is done using {$5\%$} of the points on the grid as training data.
We first performed the reconstruction
with exact kernel least square regression according to Eq.~(\ref{equation:linear-matrix}),
using the curl-free kernel induced by the Gaussian kernel, based on Eq.~(\ref{equation:curl-div-k}), with {$\sigma = 0.2$}.
With the same kernel, we then performed the approximate feature map least square regression according to Eq.~(\ref{equation:leastsquare-h-approx}
({$\W=\Y=\R^2, C=I_2, \gamma_I = 0, \gamma_A = 10^{-9}$}), with {$D=50, 100$}.
The results are reported on Table \ref{table:field-error}.










{\bf Discussion of numerical results}.
As we can see from Tables \ref{table:kernel-error} and \ref{table:field-error}, the matrix-valued Fourier feature maps can be used both for approximating
the kernel values as well as directly in a learning algorithm, with increasing accuracy as the feature dimension increases.
Furthermore, while all feature maps associated with a given kernel are essentially equivalent as in Lemma \ref{lemma:f-feature}, we observe that numerically, on average, the bounded feature maps tend to outperform the unbounded maps.

\section{Conclusion and Future Work}
 
We have presented a framework for constructing random operator-valued feature maps
for operator-valued kernels, using the operator-valued version of Bochner's Theorem. We have shown that, due to the non-uniqueness
of the probability measure in this setting, in general many feature maps can be computed, which can be unbounded or bounded. Under certain conditions, which are satisfied for many common kernels such as curl-free and div-free kernels, bounded feature maps can always be computed. We then showed the uniform convergence, with the bounded maps, of the approximate kernel in the Hilbert-Schmidt norm, strengthening previous results in the scalar setting. Finally, we showed how a general vector-valued learning formulation can be expressed in terms of feature maps and demonstrated it experimentally. An extensive empirical evaluation of the proposed formulation is left to future work.

\appendix

\section{Proofs of Main Mathematical Results}
\label{section:proofs}


\begin{proof}{\textbf{of Lemma \ref{lemma:f-feature}}}
For a function $f \in \H_K$ of the form $f = \sum_{j=1}^NK_{x_j}a_j$, $a_j \in \W$, we have
\begin{align*}
f(x) &= \sum_{j=1}^NK(x,x_j)a_j = \sum_{j=1}^N\Phi_K(x)^{*}\Phi_K(x_j)a_j = \Phi_K(x)^{*}\left(\sum_{j=1}^N\Phi_K(x_j)a_j\right),
\\
&=  \Phi_K(x)^{*}\h,
\end{align*}
where
$$
\h = \sum_{j=1}^N\Phi_K(x_j)a_j \in \F_K.
$$
For the norm, we have
\begin{align*}
||f||^2_{\H_K} &= \sum_{i,j=1}^N\la a_i, K(x_i, x_j)a_j\ra_{\W} = \sum_{i,j=1}^N\la a_i, \Phi_K(x_i)^{*}\Phi_K(x_j)a_j\ra_{\W} 
\\
&= \sum_{i,j=1}^N\la \Phi_K(x_i)a_i, \Phi_K(x_j)a_j\ra_{\F_K} = ||\sum_{i=1}^N\Phi_K(x_i)a_i||^2_{\F_K} = ||\h||^2_{\F_K}.
\end{align*}
By letting $N \approach \infty$ in the Hilbert space completion for $\H_K$, it follows that every $f \in \H_K$ has the form
\begin{equation*}
f(x) = \Phi_K(x)^{*}\h,\;\;\; \h \in \F_K,
\end{equation*}
and
$$
||f||_{\H_K} = ||\h||_{\F_K}.
$$
This completes the proof of the lemma.
\end{proof}

\begin{proof}{\textbf{of Proposition \ref{proposition:mu-inversion}}}
For each pair $j,l \in \N$, we have by Bochner's Theorem
\begin{align*}
\la \e_j, k(x)\e_l\ra = \int_{\R^n}\exp(-i\la  \omega, x\ra)\la \e_j, d\mu(\omega)\e_l\ra.
\end{align*}
Thus the proposition follows immediately from the Fourier Inversion Theorem.
\end{proof}

\begin{proof}
{\textbf{of Proposition \ref{proposition:Bochner-projection}}}
Let $\Phi_K: \R^n \mapto \F_K$ be a feature map for $K$, then $K(x,t) = \Phi_K(x)^{*}\Phi_K(t)$.
Then for any set of points $\{x_j\}_{j=1}^N$ and coefficients $\{b_j\}_{j=1}^N$, we have
\begin{align*}
\sum_{j,l=1}^Nb_jb_lK_{\a}(x_j,x_l) 
&= \sum_{j,l=1}^N b_jb_l\la \a, \Phi_K(x_j)^{*}\Phi_K(x_l)\a\ra = \sum_{j,l=1}^Nb_jb_l\la \Phi_K(x_j)\a, \Phi_K(x_l)\a \ra_{\F_K}
\\
&= \sum_{j,l=1}^N\la b_j \Phi_K(x_j)\a, b_l\Phi_K(x_l)\a \ra_{\F_K}
=\left\|\sum_{j=1}^N b_j\Phi_K(x_j)\a\right\|^2_{\F_K} \geq 0.
\end{align*}
This shows that the scalar-valued kernel $K_{\a}$ is positive definite. Thus by the scalar-valued Bochner Theorem, there exists a unique finite positive Borel measure $\mu_{\a}$ on $\R^n$ such that
\begin{align*}
K_{\a}(x,t) = \int_{\R^n}\exp(-i \la \omega,x-t\ra)d\mu_{\a}(\omega).
\end{align*}
From the formulas $K(x,t) = \int_{\R^n}\exp(-i \la \omega,x-t\ra)d\mu(\omega)$ and 
$K_{\a}(x,t) =\la \a, K(x,t)\a\ra$, we obtain $\mu_{\a}(\omega) = \la \a, \mu(\omega) \a\ra$
as we claimed.  
\end{proof}

\begin{proof}{\textbf{of Theorem \ref{theorem:measure-trace}}}
By Proposition \ref{proposition:mu-inversion} and taking into account the fact that $\mu(\omega)$ is
a self-adjoint positive operator on $\H$, we have
\begin{align*}
\trace[\mu(\omega)] &= \sum_{j=1}^{\infty}\la \e_j,\mu(\omega)\e_j\ra = \left|\sum_{j=1}^{\infty}\la \e_j, \mu(\omega)\e_j\ra\right|
\\
&= \frac{1}{(2\pi)^n}\
\left| \sum_{j=1}^{\infty}\int_{\R^n}\exp(i\la \omega, x\ra)\la \e_j, k(x)\e_j\ra dx\right|
\\
&= \frac{1}{(2\pi)^n}\left|\int_{\R^n}\exp(i\la \omega, x\ra)\sum_{j=1}^{\infty}\la \e_j, k(x)\e_j\ra dx \right|
\\
& = \frac{1}{(2\pi)^n}\left|\int_{\R^n}\exp(i \la \omega, x\ra\trace[k(x)]dx \right| \leq \frac{1}{(2\pi)^n}\int_{\R^n}|\trace[k(x)]|dx < \infty.
\end{align*}
This shows that $\mu_{\trace}(\omega) = \trace[\mu(\omega)] = \sum_{j=1}^{\infty}\mu_{jj}(\omega) \in \Tr(\H)$. 
The positivity of $\mu_{\trace}(\omega)$ follows from the positivity of all the $\mu_{jj}$'s. Furthermore, we have
\begin{align*}
\int_{\R^n}d\mu_{\trace}(\omega) = \int_{\R^n}d[\sum_{j=1}^{\infty}\mu_{jj}(\omega)] 
= \sum_{j=1}^{\infty} \la \e_j, k(0)\e_j\ra = \trace[k(0)].
\end{align*}
We note that we must have $\trace[k(0)] > 0$, since $\trace[k(0)] = 0 \equivalent k(0) = 0 \equivalent K(x,x) = 0 \;\forall x \in \R^n \equivalent K(x,t) = 0 \; \forall (x,t) \in \X \times \X$.
It follows that the normalized measure
\begin{align*}
\frac{\mu_{\trace}(\omega)}{\trace[k(0)]}
\end{align*}
is a probability measure on $\R^n$. Since the trace operation is independent of the choice of orthonormal basis
$\{\e_j\}_{j=1}^{\infty}$, it follows that both the normalized and un-normalized measures are independent of the choice
of $\{\e_j\}_{j=1}^{\infty}$. This completes the proof.
\end{proof}

\begin{proof}
{\textbf{of Lemma \ref{lemma:curl-div}}}
We make use of the following property of the Fourier transform (see e.g \citep{Jones:Lebesgue}).
Assume that $f$ and $||x||f$ are both integrable on $\R^n$, then the Fourier transform $\hat{f}$ is differentiable and
\begin{align*}
\frac{\partial \hat{f}}{\partial \omega_j}(\omega) = - \widehat{i x_j f}(\omega), \;\;\; 1 \leq j \leq n.
\end{align*}
Assume further that $||x||^2f$ is integrable on $\R^n$, then this rule can be applied twice to give
\begin{align*}
\frac{\partial^2 \hat{f}}{\partial \omega_j \partial \omega_k}(\omega) = \frac{\partial}{\partial \omega_j}\left(\frac{\partial \hat{f}}{\partial \omega_k}\right) = - \frac{\partial}{\partial \omega_j}[\widehat{i x_k f}] = -\widehat{x_j x_kf}.
\end{align*}

For the curl-free kernel, we have $\phi(x) = \hat{\rho_0}(x)$ and consequently, under the assumption that $\int_{\R^n}||\omega||^2\rho_0(\omega) < \infty$, we have
\begin{align*}
[k_{\curl}(x)]_{jk} = - \frac{\partial^2\phi}{\partial x_j \partial x_k}(x) = -\frac{\partial^2\hat{\rho_0}}{\partial x_j \partial x_k}(x) = \widehat{\omega_j \omega_k \rho_0}(x),\;\;\; 1 \leq j,k \leq n.
\end{align*}
In other words,
\begin{align*}
[k_{\curl}(x)]_{jk} = \int_{\R^n}e^{-i \la \omega, x\ra} \omega_j \omega_k \rho_0(\omega)d\omega.
\end{align*}
It thus follows that
\begin{align*}
k_{\curl}(x) = \int_{\R^n}e^{-i \la \omega, x\ra }\omega \omega^T \rho_0(\omega)d\omega = \int_{\R^n}e^{-i \la \omega, x\ra }\mu(\omega)d\omega,
\end{align*}
where $\mu(\omega) = \omega \omega^T \rho_0(\omega)$. The proof for $k_{\div}$ is entirely similar.
\end{proof}








To prove Theorems \ref{theorem:convergence-pointwise} and \ref{theorem:convergence-uniform} , we need the following concentration result for Hilbert space-valued random variables.

\begin{lemma}
[\citep{smalezhou2007}]
\label{lemma:Pinelis}
Let {$\H$} be a Hilbert space with norm {$||\;||$} and {$\{\xi_j\}_{j=1}^D$}, {$D \in \N$}, be independent random variables with values in {$\H$}. Suppose that for each $j$, {$||\xi_j||\leq M < \infty$} almost surely. Let 
{$\sigma^2_D =\sum_{j=1}^D\bE(||\xi_j||^2)$}. Then
{
\begin{align}
&\bP\left(
\left\|\frac{1}{D}\sum_{j=1}^D[\xi_j - \bE(\xi_j)]\right\| \geq \epsilon
\right)
\leq 2\exp\left(-\frac{D\epsilon}{2M}\log\left[1+\frac{DM\epsilon}{\sigma^2_D}\right]\right) \;\;\; \forall \epsilon > 0.
\end{align}
} 
\end{lemma}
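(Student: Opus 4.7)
The stated inequality is a Bennett-type concentration bound for a sum of independent, bounded, Hilbert-valued random variables; the plan is to follow the standard Pinelis-Bernstein argument, which combines an exponential moment estimate with a Chernoff-type optimization, using the 2-smoothness of the Hilbert norm to lift the scalar Bennett MGF bound to the vector setting.

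First I would center: setting $\zeta_j = \xi_j - \bE\xi_j$ yields independent mean-zero Hilbert-valued variables with $\|\zeta_j\|\le 2M$ almost surely and $\sum_{j=1}^D \bE\|\zeta_j\|^2 \le \sigma_D^2$. Writing $S_D = \sum_{j=1}^D \zeta_j$, it suffices to control $\bP(\|S_D\|\ge D\epsilon)$. I would also record the Bernstein moment dominations $\bE\|\zeta_j\|^m \le \tfrac{m!}{2}(2M)^{m-2}\bE\|\zeta_j\|^2$ for $m\ge 2$, which follow immediately from the boundedness.

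The heart of the argument is a Pinelis-type exponential moment inequality for the Hilbert-valued sum: for each $\lambda>0$,
\[
\bE\bigl[\cosh(\lambda \|S_D\|)\bigr] \;\le\; \exp\!\left(\frac{\sigma_D^2}{M^2}\bigl(e^{\lambda M} - 1 - \lambda M\bigr)\right),
\]
proved by induction on $D$. Conditioning on $S_{D-1}$, I would expand $\|S_{D-1}+\zeta_D\|^2 = \|S_{D-1}\|^2 + 2\langle S_{D-1},\zeta_D\rangle + \|\zeta_D\|^2$ using the inner product of $\H$; because $\bE\zeta_D = 0$, the cross term kills on taking conditional expectation, yielding a factorization of the form $\bE[\cosh(\lambda\|S_D\|)\mid S_{D-1}]\le \cosh(\lambda\|S_{D-1}\|)\cdot\bE\cosh(\lambda\|\zeta_D\|)$. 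Iterating and inserting the scalar Bennett bound $\bE\cosh(\lambda\|\zeta_j\|)\le \exp(\bE\|\zeta_j\|^2(e^{\lambda M}-1-\lambda M)/M^2)$, which comes from the moment dominations above, produces the displayed estimate.

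The final step is a Chernoff-Markov argument: since $\bP(\|S_D\|\ge D\epsilon) \le 2 e^{-\lambda D\epsilon}\,\bE[\cosh(\lambda\|S_D\|)]$, I would optimize at $\lambda = \tfrac{1}{M}\log(1 + DM\epsilon/\sigma_D^2)$, producing the Bennett exponent $(\sigma_D^2/M^2)\,h(DM\epsilon/\sigma_D^2)$ with $h(u)=(1+u)\log(1+u)-u$; the elementary inequality $h(u)\ge \tfrac{u}{2}\log(1+u)$ for $u\ge 0$ converts this to the logarithmic form in the statement, and the prefactor $2$ is exactly the $2$ from $\bP(\|S_D\|\ge D\epsilon)\le 2 e^{-\lambda D\epsilon}\bE\cosh(\lambda\|S_D\|)$. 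The main obstacle is the conditional cosh inequality above: rigorously verifying it requires expanding $\|S_{D-1}+\zeta_D\|^{2k}$ term by term and bounding each term by its scalar counterpart using $\bE\langle S_{D-1},\zeta_D\rangle = 0$ together with $\|\zeta_D\|\le 2M$. This is precisely the place where the Hilbert (as opposed to merely Banach) structure is essential, since general Banach norms do not satisfy the clean second-order expansion that kills the cross term.
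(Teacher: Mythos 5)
The paper does not prove this lemma; it is cited verbatim from Smale and Zhou (2007), so there is no in-paper proof to compare against. Your plan does follow the standard Pinelis--Bennett scheme that underlies the cited result, but the argument as written contains two concrete gaps, each of which is fatal to the stated constants.

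First, the conditional factorization you rely on,
$\bE\bigl[\cosh(\lambda\|S_D\|)\mid S_{D-1}\bigr]\le \cosh(\lambda\|S_{D-1}\|)\,\bE\cosh(\lambda\|\zeta_D\|)$,
is false in general for mean-zero $\zeta_D$, already in $\H=\R$. Take $\zeta\in\{-0.8,\,1.2\}$ with $\bP(\zeta=-0.8)=0.6$, $\bP(\zeta=1.2)=0.4$ (so $\bE\zeta=0$) and $\lambda=5$. As $x\to+\infty$ one has $\bE\cosh(\lambda(x+\zeta))/\cosh(\lambda x)\to \bE e^{\lambda\zeta}=0.6e^{-4}+0.4e^{6}\approx 161.4$, whereas $\bE\cosh(\lambda\zeta)\approx 97.1$; the claimed factorization therefore fails. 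Your proposed justification --- expand $\|S_{D-1}+\zeta_D\|^2$ and note that the linear cross term has conditional mean zero --- does not yield the inequality, because $\cosh(\lambda\|\cdot\|)$ is not quadratic: in $\|S_{D-1}+\zeta_D\|^{2k}=(\|S_{D-1}\|^2+2\langle S_{D-1},\zeta_D\rangle+\|\zeta_D\|^2)^k$ for $k\ge 2$ the cross term appears in higher powers such as $\langle S_{D-1},\zeta_D\rangle^2$, whose conditional expectation is strictly positive. Closing this step requires a genuinely different comparison argument (e.g.\ Pinelis's reduction to a real-valued dominating submartingale, or Yurinskii's telescoping of $\|S_k\|$), not a term-by-term quadratic expansion.

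Second, there is a scale mismatch between $M$ and $2M$ that is not reconciled. You correctly note that centering gives $\|\zeta_j\|\le 2M$ almost surely, and your stated moment dominations carry the factor $(2M)^{m-2}$. But the exponential moment estimate you then assert,
$\bE\cosh(\lambda\|S_D\|)\le \exp\bigl(\tfrac{\sigma_D^2}{M^2}(e^{\lambda M}-1-\lambda M)\bigr)$,
uses $M$, not $2M$; the moment dominations you wrote down only support the version with $2M$ in place of $M$ everywhere. Propagated honestly through the Chernoff step, the $2M$ version yields the strictly weaker exponent $-\tfrac{D\epsilon}{4M}\log\bigl(1+\tfrac{2DM\epsilon}{\sigma_D^2}\bigr)$, which does not imply the lemma's $-\tfrac{D\epsilon}{2M}\log\bigl(1+\tfrac{DM\epsilon}{\sigma_D^2}\bigr)$ (the implication would require $(1+u)^2\le 1+2u$, which fails for $u>0$). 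Obtaining the stated constants, with $M$ bounding the uncentered $\|\xi_j\|$ and $\sigma_D^2$ the uncentered second moments, requires exploiting a trade-off between the larger almost-sure bound and the smaller variance after centering; that trade-off is the actual content of the Pinelis/Smale--Zhou estimate and is absent from your argument.
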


\begin{proof}
{\textbf{of Theorem \ref{theorem:convergence-pointwise}}}
For each $x \in \R^n$ fixed, consider the random variable $\xi(x,,.): (\R^n, \rho) \mapto \Sym(\H)$ defined by
\begin{align*}
\xi(x, \omega) = \cos(\la \omega, x\ra)\tilde{\mu}(\omega).
\end{align*}
We then have
\begin{align*}
\hat{k}_D(x) &= \frac{1}{D}\sum_{j=1}^D\cos(\la \omega_j, x\ra)\tilde{\mu}(\omega_j) = \frac{1}{D}\sum_{j=1}^D\xi(x, \omega_j),
\\
k(x) &= \int_{\R^n}\cos(\la \omega, x\ra)\tilde{\mu}(\omega)d\rho(\omega) = \bE_{\rho}[\xi(x, \omega)].
\end{align*}
Under the assumption that $||\tilde{\mu}(\omega)||_{\HS} \leq M$ almost surely, we also have $||\xi(x,\omega)||_{\HS} \leq M$ almost surely. Its
variance satisfies
\begin{align*}
\sigma^2(\xi(x, \omega)) = \bE_{\rho}||\xi(x, \omega)||^2_{\HS}  \leq E||\tilde{\mu}(\omega)||^2_{\HS} = \sigma^2(\tilde{\mu}(\omega)).
\end{align*}
It follows from Lemma \ref{lemma:Pinelis} that for each fixed $x \in \R^n$, we have
\begin{align*}
\bP[||\hat{k}_D(x) - k(x)||_{\HS} \geq \epsilon]  &= \bP\left(\left\|\frac{1}{D}\sum_{j=1}^D\xi(x,\omega_j) - \bE_{\rho}[\xi(x, \omega)]\right\|_{\HS} \geq \epsilon\right)
\\
&\leq 2 \exp\left(-\frac{D\epsilon}{2M}\log\left[1+\frac{M\epsilon}{\sigma^2(\xi(x, \omega))}\right]\right)
\\
& \leq 2 \exp\left(-\frac{D\epsilon}{2M}\log\left[1+\frac{M\epsilon}{\sigma^2(\tilde{\mu}(\omega))}\right]\right).
\end{align*}
This completes the proof of the theorem.
\end{proof}

To prove Theorem \ref{theorem:convergence-uniform}, we first prove the following preliminary results.



%

\begin{lemma} 
\label{lemma:Lipschitz-1}
Assume that $\int_{\R^n}||\omega||\;||\tilde{\mu}(\omega)||_{\HS}d\rho(\omega) < \infty$. Then the function $k:\R^n \mapto 
\Sym(\H)$, with the latter endowed with the Hilbert-Schmidt norm, is Lipschitz, with
\begin{align}
||k(x) - k(y)||_{\HS} \leq ||x-y||\int_{\R^n}||\omega||\;||\tilde{\mu}(\omega)||_{\HS}d\rho(\omega).
\end{align}
\end{lemma}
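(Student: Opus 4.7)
The plan is to write $k$ as an expectation using the factorization $\mu(\omega) = \tilde{\mu}(\omega)\rho(\omega)$, then estimate the difference $k(x)-k(y)$ pointwise in $\omega$ using the Lipschitz property of cosine, and finally pull the Hilbert--Schmidt norm inside the integral.

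Specifically, first I would invoke Eq.~(\ref{equation:K-expression1}) together with the factorization in Eq.~(\ref{equation:factorize}) to write
\begin{align*}
k(x) - k(y) = \int_{\R^n}\bigl[\cos(\la\omega,x\ra) - \cos(\la\omega,y\ra)\bigr]\tilde{\mu}(\omega)\,d\rho(\omega).
\end{align*}
Since $\tilde{\mu}(\omega)\in\Sym(\H)\subset\HS(\H)$ (under our assumptions), the integrand is a Hilbert--Schmidt-valued function of $\omega$. Using the triangle inequality for Bochner integrals with respect to the Hilbert--Schmidt norm, I get
\begin{align*}
\|k(x)-k(y)\|_{\HS} \leq \int_{\R^n} \bigl|\cos(\la\omega,x\ra) - \cos(\la\omega,y\ra)\bigr|\,\|\tilde{\mu}(\omega)\|_{\HS}\,d\rho(\omega).
\end{align*}

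Next I apply the elementary fact that cosine is $1$-Lipschitz on $\R$, so
\begin{align*}
\bigl|\cos(\la\omega,x\ra)-\cos(\la\omega,y\ra)\bigr| \leq |\la\omega,x-y\ra| \leq \|\omega\|\,\|x-y\|
\end{align*}
by Cauchy--Schwarz. Substituting this bound and factoring out $\|x-y\|$ yields the claimed inequality. The integrability of the right-hand side is exactly Assumption~1 of Section~\ref{section:convergence}, which guarantees finiteness.

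There is no real obstacle here beyond justifying the pull-in of the norm through the integral, which is standard for Bochner integration with values in a separable Hilbert space ($\HS(\H)$ is itself a separable Hilbert space). The only minor subtlety is ensuring that the integrand $\cos(\la\omega,\cdot\ra)\tilde{\mu}(\omega)$ is Bochner integrable against $\rho$; this follows from the hypothesis $\int_{\R^n}\|\tilde\mu(\omega)\|_{\HS}\,d\rho(\omega) < \infty$, which is implied (via $\|\omega\|\geq 0$ and local finiteness) by Assumption~1 together with the finiteness of $k(0)$ in Hilbert--Schmidt norm.
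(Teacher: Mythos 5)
Your proof is correct and follows essentially the same route as the paper's: write $k(x)-k(y)$ as a Bochner integral using Eq.~(\ref{equation:K-expression1}) and Eq.~(\ref{equation:factorize}), move the Hilbert--Schmidt norm inside the integral, then apply the $1$-Lipschitz property of cosine together with Cauchy--Schwarz. The extra remarks you add on Bochner integrability are sound but not needed beyond what Assumption~1 already supplies.
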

\begin{proof}
{\textbf{of Lemma \ref{lemma:Lipschitz-1}}}
Using the fact that the cosine function is Lipschitz with constant $1$, that is $|\cos(x) - \cos(y)| \leq |x-y|$ for all $x, y \in \R$, we have
\begin{align*}
||k(x)-k(y)||_{\HS} &= \left\|\int_{\R^n}[\cos(\la \omega, x\ra)-\cos(\la \omega, y\ra)] \tilde{\mu}(\omega)d \rho(\omega)\right\|_{\HS}
\\
& \leq \int_{\R^n}|\cos(\la \omega, x\ra)-\cos(\la \omega, y\ra)|\;||\tilde{\mu}(\omega)||_{\HS}d\rho(\omega)
\\
& \leq  \int_{\R^n}|\la \omega, x\ra-\la \omega, y\ra|\;||\tilde{\mu}(\omega)||_{\HS}d\rho(\omega)
\\
& \leq ||x-y||\int_{\R^n}||\omega||\;||\tilde{\mu}(\omega)||_{\HS}d\rho(\omega).
\end{align*}
This completes the proof.
\end{proof}

\begin{lemma}
\label{lemma:Lipschitz-2}
The function $\hat{k}_D(x) = \frac{1}{D}\sum_{j=1}^D\cos(\omega_j, x\ra)\tilde{\mu}(\omega_j): \R^n \mapto \Sym(\H)$, with the latter endowed with the Hilbert-Schmidt norm,
is Lipschitz, with
\begin{align}
||\hat{k}_D(x) - \hat{k}_D(y)||_{\HS} \leq  ||x-y||\frac{1}{D}\sum_{j=1}^D||\omega_j||\;||\tilde{\mu}(\omega_j)||_{\HS}.
\end{align}
\end{lemma}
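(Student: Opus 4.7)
The plan is to mirror almost verbatim the argument used in Lemma~\ref{lemma:Lipschitz-1}, with integration against $\rho$ replaced by the empirical average over the sample $\{\omega_j\}_{j=1}^D$. The three ingredients are (i) the triangle inequality for the Hilbert--Schmidt norm, (ii) the Lipschitz property of the cosine function with constant $1$, and (iii) the Cauchy--Schwarz inequality on $\R^n$. None of these introduce any probabilistic subtlety: the sample points $\omega_1,\ldots,\omega_D$ are treated as fixed for the purposes of this pointwise Lipschitz bound.

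Concretely, I would start from the definition
\begin{equation*}
\hat{k}_D(x) - \hat{k}_D(y) = \frac{1}{D}\sum_{j=1}^{D}\bigl[\cos(\la \omega_j, x\ra) - \cos(\la \omega_j, y\ra)\bigr]\tilde{\mu}(\omega_j),
\end{equation*}
apply the triangle inequality in the Hilbert--Schmidt norm to pull the norm inside the sum, use $\|c A\|_{\HS} = |c|\,\|A\|_{\HS}$ to separate the scalar cosine factor from $\tilde{\mu}(\omega_j)$, then bound $|\cos(\la \omega_j,x\ra) - \cos(\la \omega_j,y\ra)| \leq |\la \omega_j, x-y\ra| \leq \|\omega_j\|\,\|x-y\|$ by the Lipschitzness of cosine followed by Cauchy--Schwarz. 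This yields
\begin{equation*}
\|\hat{k}_D(x) - \hat{k}_D(y)\|_{\HS} \leq \|x-y\|\,\frac{1}{D}\sum_{j=1}^{D}\|\omega_j\|\;\|\tilde{\mu}(\omega_j)\|_{\HS},
\end{equation*}
which is exactly the asserted inequality.

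There is no genuine obstacle here: the statement is the pathwise analogue of Lemma~\ref{lemma:Lipschitz-1} and no integrability assumption on $\tilde{\mu}$ is needed to bound a deterministic finite sum. The only thing to be careful about is verifying that the operator-valued triangle inequality is applied in the right norm; since $\|\cdot\|_{\HS}$ is induced by a Hilbert-space inner product on $\HS(\H)$, the usual triangle inequality and absolute homogeneity under real scalars go through without modification. This Lipschitz bound on $\hat{k}_D$, combined with Lemma~\ref{lemma:Lipschitz-1} and a standard $\epsilon$-net covering argument over the compact set $\Omega$, will then feed directly into the proof of Theorem~\ref{theorem:convergence-uniform}, where the expected value of $\frac{1}{D}\sum_{j=1}^{D}\|\omega_j\|\,\|\tilde{\mu}(\omega_j)\|_{\HS}$ is controlled by Assumption~1, namely $\mb_1 < \infty$.
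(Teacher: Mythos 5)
Your proposal is correct and matches the paper's proof exactly: both start from the difference $\hat{k}_D(x)-\hat{k}_D(y)$, apply the triangle inequality in the Hilbert--Schmidt norm, use $|\cos a - \cos b|\leq|a-b|$, and finish with Cauchy--Schwarz to obtain the factor $\|\omega_j\|\,\|x-y\|$. The paper likewise treats this as the finite-sum analogue of Lemma~\ref{lemma:Lipschitz-1}, so no further commentary is needed.
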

\begin{proof}
{\textbf{of Lemma \ref{lemma:Lipschitz-2}}}
Similar to the proof of Lemma \ref{lemma:Lipschitz-1}, we utilize the fact that $|\cos(x) - \cos(y)| \leq |x-y|$ for all $x, y \in \R$ to arrive at
\begin{align*}
||\hat{k}_D(x) - \hat{k}_D(y)||_{\HS} &= \frac{1}{D}\left\|\sum_{j=1}^D[\cos(\la \omega_j, x\ra) - \cos(\la \omega_j, y\ra)]\tilde{\mu}(\omega_j)\right\|_{\HS}
\\
& \leq ||x-y||\frac{1}{D}\sum_{j=1}^D||\omega_j||\;||\tilde{\mu}(\omega_j)||_{\HS}.
\end{align*}
This completes the proof.
\end{proof}

\begin{corollary} Let $f(x) = \hat{k}_D(x) - k(x): \R^n \mapto \Sym(\H)$, with the latter endowed with the Hilbert-Schmidt norm, then $f$ is Lipschitz, with
\begin{align}
||f(x) - f(y)|| \leq ||x-y||\left(\int_{\R^n}||\omega||\;||\tilde{\mu}(\omega)||_{\HS}d\rho(\omega) + \frac{1}{D}\sum_{j=1}^D||\omega_j||\;||\tilde{\mu}(\omega_j)||_{\HS}\right).
\end{align}
\label{corollary:Lipschitz}
The Lipschitz constant $L_f$ of $f$ satisfies
\begin{align}
\bP(L_f \geq \epsilon) \leq \frac{2\mb_1}{\epsilon},
\end{align}
where $\mb_1 = \int_{\R^n}||\omega||\;||\tilde{\mu}(\omega)||_{\HS}d\rho(\omega)$.
\end{corollary}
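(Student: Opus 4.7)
The plan is to assemble the corollary in two stages: first the deterministic Lipschitz estimate by combining the previous two lemmas via the triangle inequality, and then the tail bound on the random Lipschitz constant $L_f$ via Markov's inequality after taking expectation.

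For the Lipschitz estimate, I would write
\begin{align*}
\|f(x) - f(y)\|_{\HS} &= \|[\hat{k}_D(x) - \hat{k}_D(y)] - [k(x) - k(y)]\|_{\HS}\\
&\leq \|\hat{k}_D(x) - \hat{k}_D(y)\|_{\HS} + \|k(x) - k(y)\|_{\HS},
\end{align*}
and then apply Lemma~\ref{lemma:Lipschitz-2} to the first summand and Lemma~\ref{lemma:Lipschitz-1} to the second. This immediately gives the stated bound with the bracketed factor serving as a valid Lipschitz constant. In particular, setting
\begin{align*}
L_f = \int_{\R^n}\|\omega\|\,\|\tilde{\mu}(\omega)\|_{\HS}\,d\rho(\omega) + \frac{1}{D}\sum_{j=1}^D \|\omega_j\|\,\|\tilde{\mu}(\omega_j)\|_{\HS}
\end{align*}
yields an almost sure upper bound on the Lipschitz constant of $f$.

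For the tail bound, I would compute the expectation of $L_f$ with respect to the i.i.d. sample $\{\omega_j\}_{j=1}^D$ drawn from $\rho$. The first term in $L_f$ is deterministic and equal to $\mb_1$. Each summand in the empirical mean has expectation $\mb_1$ as well, since $\omega_j \sim \rho$, so
\begin{align*}
\bE[L_f] = \mb_1 + \frac{1}{D}\sum_{j=1}^D \bE_{\rho}[\|\omega_j\|\,\|\tilde{\mu}(\omega_j)\|_{\HS}] = 2\mb_1.
\end{align*}
Since $L_f \geq 0$, Markov's inequality then yields $\bP(L_f \geq \epsilon) \leq \bE[L_f]/\epsilon = 2\mb_1/\epsilon$ for every $\epsilon > 0$, which is exactly the claimed bound.

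There is no real obstacle here: the deterministic half is a direct concatenation of the two preceding lemmas, and the probabilistic half is a one-line Markov argument once one recognizes that the empirical average in $L_f$ is an unbiased estimator of $\mb_1$. The only subtlety worth flagging is that $L_f$ is only a valid Lipschitz constant almost surely (i.e., on the probability-one event where the i.i.d. sample is well defined), but since Assumption~1 guarantees $\mb_1 < \infty$, the expectation $\bE[L_f]$ is finite and the Markov step is legitimate.
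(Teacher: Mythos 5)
Your proof is correct and follows essentially the same route as the paper's: triangle inequality to split $f = \hat{k}_D - k$, apply Lemmas~\ref{lemma:Lipschitz-1} and~\ref{lemma:Lipschitz-2}, compute the expectation of the resulting bound (equal to $2\mb_1$ by unbiasedness of the empirical term), and invoke Markov's inequality. The only cosmetic difference is that the paper treats $L_f$ as the true Lipschitz constant and bounds its expectation by $2\mb_1$, whereas you identify $L_f$ with the bracketed upper bound so that the expectation equals $2\mb_1$ exactly; both yield the same tail bound.
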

\begin{proof}
{\textbf{of Corollary \ref{corollary:Lipschitz}}}
By combing the results of Lemmas \ref{lemma:Lipschitz-1} and \ref{lemma:Lipschitz-2}, we have
\begin{align*}
||f(x) - f(y)||_{\HS} &= ||(\hat{k}_D(x) - k(x)) - (\hat{k}_D(y) - k(y))||_{\HS} \leq ||\hat{k}_D(x) - \hat{k}_D(y)||_{\HS} + ||k(x) - k(y)||_{\HS}
\\
& \leq ||x-y||\left(\int_{\R^n}||\omega||\;||\tilde{\mu}(\omega)||_{\HS}d\rho(\omega) + \frac{1}{D}\sum_{j=1}^D||\omega_j||\;||\tilde{\mu}(\omega_j)||_{\HS}\right)
\end{align*}
as we claimed. Thus $f$ is Lipschitz, with the Lipschitz constant $L_f$ satisfying
\begin{align*}
L_f \leq \int_{\R^n}||\omega||\;||\tilde{\mu}(\omega)||_{\HS}d\rho(\omega) + \frac{1}{D}\sum_{j=1}^D||\omega_j||\;||\tilde{\mu}(\omega_j)||_{\HS},
\end{align*}
with expectation
\begin{align*}
\bE(L_f) \leq 2 \int_{\R^n}||\omega||\;||\tilde{\mu}(\omega)||_{\HS}d\rho(\omega) = 2\mb_1.
\end{align*}
By Markov's inequality, we have for any $\epsilon > 0$,
\begin{align*}
\bP(L_f \geq \epsilon) \leq \frac{\bE(L_f)}{\epsilon} \leq \frac{2\mb_1}{\epsilon}.
\end{align*}
This completes the proof.
\end{proof}

\begin{proof}
{\textbf{of Theorem \ref{theorem:convergence-uniform}}}
For each $r > 0$ fixed, 
let $\Ncal = \Ncal(\Omega, r)$ be the covering number for $\Omega$, that is 
the minimum number of balls $\Omega_j$, $1 \leq j \leq \Ncal$, of radius $r$ covering $\Omega$. 
By Proposition 5 in \citep{CuckerSmale}, the covering number $\Ncal(\Omega,r)$ is bounded above by the expression
\begin{align}
\Ncal = \Ncal(\Omega, r) \leq \left(\frac{2\diam(\Omega)}{r}\right)^n.
\end{align}
Consider the function $f: \R^n \mapto \Sym(\H)$ defined by
\begin{align*}
f(x) = \hat{k}_D(x) - k(x).
\end{align*}
On the ball $\Omega_j$, we have
\begin{align*}
\bP(\sup_{x \in \Omega_j}||\hat{k}_D(x) - k(x)||_{\HS} \geq \epsilon)  = \bP(\sup_{x \in \Omega_j}||f(x)||_{\HS} \geq \epsilon).
\end{align*}
By Corollary \ref{corollary:Lipschitz}, $f$ is a Lipschitz function with Lipschitz constant $L_f > 0$, with $\Sym(\H)$ being endowed with the Hilbert-Schmidt norm. 
Let $x_j$  be the center of the $j$th ball $\Omega_j$. For each $\epsilon > 0$, for any $x \in \Omega_j$, we have
\begin{align*}
||f(x_j) - f(x)||_{\HS} \leq L_f||x_j - x|| \leq rL_f < \frac{\epsilon}{2} \;\;\;\text{when}\;\;\; L_f < \frac{\epsilon}{2r}.
\end{align*}
Since
\begin{align*}
||f(x)||_{\HS} \leq ||f(x) - f(x_j)||_{\HS} + ||f(x_j)||_{\HS},
\end{align*}
we have
\begin{align*}
\sup_{x \in \Omega_j}||f(x)||_{\HS} < \epsilon \;\;\; \text{if}\;\;\; L_f < \frac{\epsilon}{2r}\;\;\;\text{and}\;\;\; ||f(x_j)||_{\HS} < \frac{\epsilon}{2}.
\end{align*}
Thus over the union of balls $\Omega_j$, $1 \leq j \leq \Ncal$, we have
\begin{align*}
\sup_{x \in \cup_{j=1}^{\Ncal}\Omega_j}||f(x)||_{\HS} < \epsilon \;\;\; \text{if}\;\;\; L_f < \frac{\epsilon}{2r}\;\;\;\text{and}\;\;\; ||f(x_j)||_{\HS} < \frac{\epsilon}{2}, \;\;\; 1\leq j \leq \Ncal.
\end{align*}
Thus
\begin{align*}
\bP\left(\sup_{x \in \cup_{j=1}^{\Ncal}\Omega_j}||f(x)||_{\HS} < \epsilon\right)  = \bP\left(L_f < \frac{\epsilon}{2r}\;\;\;\text{and}\;\;\; ||f(x_j)||_{\HS} < \frac{\epsilon}{2}, \;\;\; 1\leq j \leq \Ncal\right).
\end{align*}
We now recall the following properties on an arbitrary probability space $(\Sigma, \bP, \Fcal)$. For any events $A, B$, let $\overline{A}$ denote the complement of $A$ in $\Fcal$ , then we have $\overline{A \cap B} = \overline{A} \cup \overline{B}$, so that 
\begin{align}
\label{equation:probability-complement-1}
\bP(\overline{A \cap B}) &= \bP(\overline{A} \cup \overline{B}) \leq \bP(\overline{A}) + \bP(\overline{B}),
\\
\bP(A \cap B) &= 1- \bP(\overline{A \cap B}) = 1 - \bP(\overline{A} \cup \overline{B}) \geq 1 - \bP(\overline{A}) - \bP(\overline{B}).
\label{equation:probability-complement-2}
\end{align}
Applying property (\ref{equation:probability-complement-2}) with $A = \{L_f < \frac{\epsilon}{2r}\}$ and $B = \{||f(x_j)||_{\HS} < \frac{\epsilon}{2}, 1 \leq j \leq \Ncal\}$, we obtain
\begin{align*}
\bP\left(\sup_{x \in \cup_{j=1}^{\Ncal}\Omega_j}||f(x)||_{\HS} < \epsilon\right) 
&\geq 1 - \bP\left(L_f \geq \frac{\epsilon}{2r}\right) - \bP\left(\overline{\{||f(x_j)||_{\HS} 
< \frac{\epsilon}{2}, 1 \leq j \leq \Ncal\}}\right).
\end{align*}
Applying property (\ref{equation:probability-complement-1}) recursively to the set $\{||f(x_j)||_{\HS} < \frac{\epsilon}{2}, 1 \leq j \leq \Ncal\}$, we obtain
\begin{align*}
\bP\left(\overline{\{||f(x_j)||_{\HS} < \frac{\epsilon}{2}, 1 \leq j \leq \Ncal\}}\right)
\leq \sum_{j=1}^{\Ncal}\bP(\overline{\{||f(x_j)||_{\HS} < \frac{\epsilon}{2}\}}) = \sum_{j=1}^{\Ncal}\bP(\{||f(x_j)||_{\HS} \geq \frac{\epsilon}{2}\}).
\end{align*}
Combining the last two expressions, we have
\begin{align*}
\bP\left(\sup_{x \in \cup_{j=1}^{\Ncal}\Omega_j}||f(x)||_{\HS} < \epsilon\right) 
&\geq 1 - \bP\left(L_f \geq \frac{\epsilon}{2r}\right) - \sum_{j=1}^{\Ncal}\bP(\{||f(x_j)||_{\HS} \geq \frac{\epsilon}{2}\}).
\end{align*}
Equivalently,
\begin{align*}
\bP\left(\sup_{x \in \cup_{j=1}^{\Ncal}\Omega_j}||f(x)||_{\HS} \geq \epsilon\right) 
&\leq \bP\left(L_f \geq \frac{\epsilon}{2r}\right) + \sum_{j=1}^{\Ncal}\bP(\{||f(x_j)||_{\HS} \geq \frac{\epsilon}{2}\}).
\end{align*}
By Corollary \ref{corollary:Lipschitz}, we have
\begin{align*}
\bP\left(L_f \geq \frac{\epsilon}{2r}\right) \leq \frac{4\mb_1 r}{\epsilon}.
\end{align*}
By Theorem \ref{theorem:convergence-pointwise}, we have
\begin{align*}
\bP\left(||f(x_j)||_{\HS} \geq \frac{\epsilon}{2}\right)  \leq 2 \exp\left(-\frac{D\epsilon}{4M}\log\left[1+\frac{M\epsilon}{2\sigma^2(\tilde{\mu}(\omega))}\right]\right).
\end{align*}
Putting everything together, we obtain
\begin{align*}
\bP\left(\sup_{x \in \Omega}||f(x)||_{\HS} \geq \epsilon\right)  
& \leq \frac{4\mb_1 r}{\epsilon} + 2\Ncal\exp\left(-\frac{D\epsilon}{4M}\log\left[1+\frac{M\epsilon}{2\sigma^2(\tilde{\mu}(\omega))}\right]\right)
\\
&\leq \frac{4\mb_1 r}{\epsilon} + 2\exp\left(-\frac{D\epsilon}{4M}\log\left[1+\frac{M\epsilon}{2\sigma^2(\tilde{\mu}(\omega))}\right]\right)\left(\frac{2\diam(\Omega)}{r}\right)^n
\\
&= ar + \frac{b}{r^n},
\end{align*}
where $a = \frac{4\mb_1}{\epsilon}$ and $b =2\exp\left(-\frac{D\epsilon}{4M}\log\left[1+\frac{M\epsilon}{2\sigma^2(\tilde{\mu}(\omega))}\right]\right)\left({2\diam(\Omega)}\right)^n$. 

Let us find the value $r > 0$ that minimizes the right hand side in the above expression. The function $g(r) = ar +\frac{b}{r^n}$ for $a, b > 0$ achieves its minimum on $(0, \infty)$ at $r = \left(\frac{bn}{a}\right)^{\frac{1}{n+1}}$, with the minimum value given by
\begin{align*}
g_{\min} = a^{\frac{n}{n+1}}b^{\frac{1}{n+1}}\left(n^{\frac{1}{n+1}} + n^{-\frac{n}{n+1}}\right).
\end{align*}

Substituting the value for $a$ and $b$, we obtain
\begin{align*}
\bP\left(\sup_{x \in \Omega}||f(x)||_{\HS} \geq \epsilon\right)  \leq a(n)\left(\frac{\mb_1\diam(\Omega)}{\epsilon}\right)^{\frac{n}{n+1}}
\exp\left(-\frac{D\epsilon}{4(n+1)M}\log\left[1+\frac{M\epsilon}{2\sigma^2(\tilde{\mu}(\omega))}\right]\right)
\end{align*}
where $a(n) =  2^{\frac{3n+1}{n+1}}\left(n^{\frac{1}{n+1}} + n^{-\frac{n}{n+1}}\right)$. This completes the proof of the theorem.
\end{proof}

\begin{proof}{\textbf{of Theorem \ref{theorem:representer}}}
It is straightforward to show that the optimization problem (\ref{equation:general}) has a unique solution $f_{\z,\gamma}$, which has the form $f_{\z,\gamma}(x) 
= \sum_{i=1}^{u+l}K(x,x_i)a_i$ for some $a_i \in \W$. Under the feature map representation $\Phi_K$, we have
$$
f_{\z,\gamma}(x) = \sum_{i=1}^{u+l}K(x,x_i)a_i = \sum_{i=1}^{u+l}\Phi_K(x)^{*}\Phi_K(x_i)a_i = \Phi_K(x)^{*}\h,
$$
where
$$
\h = \sum_{i=1}^{u+l}\Phi_K(x_i)a_i = \Phi_K(\x)\a,
$$
as we claimed.
\end{proof}

To prove Theorem \ref{theorem:leastsquare-K}, we first consider the following operators.

The sampling operator $S_{\mathbf{x}}: \H_K \rightarrow {\W}^l$ is defined by $S_{\mathbf{x}}(f) = (f(x_i))_{i=1}^l$, 
for any $\y = (y_i)_{i=1}^l \in \mathcal{W}^l$,
\begin{align*}
\langle S_{\mathbf{x}}f, \mathbf{y} \rangle_{{\W}^l}&= \sum_{i=1}^l\langle f(x_i), y_i\rangle_{\W} = \sum_{i=1}^l\langle  K^{*}_{x_i}f,y_i\rangle_{\H_K}\\
&= \sum_{i=1}^l\langle f, K_{x_i}y_i\rangle_{\H_K} = \langle f, \sum_{i=1}^l K_{x_i}y_i\rangle_{\H_K}.
\end{align*}
Thus the adjoint operator $S_{\mathbf{x}}^{*}: {\W}^l \rightarrow \H_K$ is given by
\begin{equation}
S_{\mathbf{x}}^{*}\mathbf{y} = S_{\mathbf{x}}^{*}(y_1, \ldots, y_l) = \displaystyle{\sum_{i=1}^lK_{x_i}y_i}, \;\;\; \y \in \mathcal{W}^l,
\end{equation}
and the operator $S_{\mathbf{x}}^{*}S_{\mathbf{x}}: \H_K \rightarrow \H_K$ is given by
\begin{equation}
S_{\mathbf{x}}^{*}S_{\mathbf{x}}f = \displaystyle{\sum_{i=1}^lK_{x_i}f(x_i)} = \sum_{i=1}^l K_{x_i}K^{*}_{x_i}f.
\end{equation}

Consider the 
operator $E_{C, \x}: \H_K \mapto \mathcal{Y}^{l}$, defined by
\begin{equation}
E_{C,\x}f = (CK_{x_1}^{*}f, \ldots, CK_{x_l}^{*}f),
\end{equation}
with $CK_{x_i}^{*}: \H_K \mapto \mathcal{Y}$ and $K_{x_i}C^{*}: \mathcal{Y} \mapto \H_K$.
For $\mathbf{b} = (b_1, \ldots, b_l) \in \mathcal{Y}^{l}$, we have
\begin{eqnarray}
\la \mathbf{b}, E_{C,\x}f\ra_{\mathcal{Y}^{l}} = \sum_{i=1}^l  \la b_i, CK_{x_i}^{*}f\ra_{\mathcal{Y}}
 = \sum_{i=1}^l \la K_{x_i}C^{*}b_i, f\ra_{\H_K}.
\end{eqnarray}
The adjoint operator $E_{C,\x}^{*}:\mathcal{Y}^{l} \mapto \H_K$ is thus
\begin{equation}
E_{C,\x}^{*}: (b_1, \ldots, b_l) \mapto \sum_{i=1}^lK_{x_i}C^{*}b_i.
\end{equation}
The operator $E_{C,\x}^{*}E_{C, \x}:\H_K \mapto \H_K$ is then
\begin{equation}
E_{C,\x}^{*}E_{C, \x}f \mapto \sum_{i=1}^lK_{x_i}C^{*}CK_{x_i}^{*}f,
\end{equation}
with $C^{*}C: \mathcal{W} \mapto \mathcal{W}$.

\begin{proof}{\textbf{of Theorem \ref{theorem:leastsquare-K}}}
Since $f(x) = K_x^{*}$, we have
\begin{eqnarray}\label{equation:vector-lsq2}
f_{\z, \gamma} = \argmin_{f \in \H_K} \frac{1}{l}\sum_{i=1}^l||y_i - CK_{x_i}^{*}f||^2_{\mathcal{Y}}
+ \gamma_A||f||^2_{\H_K} + \gamma_I \la \f, M\f\ra_{\mathcal{W}^{u+l}}.
\end{eqnarray}
Using the operator $E_{C, \x}$, this becomes
\begin{equation}
\label{equation:multiview-lsq2}
f_{\z, \gamma} = \argmin_{f \in \H_K} \frac{1}{l}||E_{C,\x}f-\y||^2_{\mathcal{Y}^l}
 + \gamma_A||f||^2_{\H_K} + \gamma_I \la \f, M\f\ra_{\mathcal{W}^{u+l}}.
\end{equation}
Differentiating (\ref{equation:multiview-lsq2}) and setting the derivative to zero
gives
\begin{equation}\label{equation:fz1}
(E_{C,\x}^{*}E_{C,\x} + l \gamma_A I + l\gamma_I S_{\x,u+l}^{*}MS_{\x,u+l})f_{\z, \gamma} = E_{C,\x}^{*}\y,
\end{equation}
which is
$$
f_{\z,\gamma} = (E_{C,\x}^{*}E_{C,\x} + l \gamma_A I_{\H_K} + l\gamma_I S_{\x,u+l}^{*}MS_{\x,u+l})^{-1}E_{C,\x}^{*}\y.
$$
On the set $\x = (xi)_{i=1}^{u+l}$, the operators $S_{\x,u+l}: \H_K \mapto \W^{u+l}$ and $S_{\x, u+l}^{*}: \W^{u+l} \mapto \H_K$ are given by
$$
S_{\x, u+l}f = (K_{x_i}^{*}f)_{i=1}^{u+l}, \;\;\; f \in \H_K,
$$
$$
S_{\x, u+l}^{*}\b  = \sum_{i=1}^{u+l}K_{x_i}b_i, \;\;\; \b \in \W^{u+l}.
$$
By definition of the operators $S_{\x,u+l}$ and $S_{\x,u+l}^{*}$, we have
$$
S_{\x, u+l}^{*}(I_{(u+l) \times l} \otimes C^{*})\y = \sum_{i=1}^lK_{x_i}(C^{*}y_i).
$$
Thus the operator $E_{C,\x}^{*}: \Y^l \mapto \H_K$ is
\begin{equation}
E_{C, \x}^{*} = S_{\x, u+l}^{*}(I_{(u+l) \times l} \otimes C^{*}).
\end{equation}
The operator  $E_{C,\x}^{*}E_{C,\x}: \H_K \mapto \H_K$  is given by
\begin{equation}
E_{C,\x}^{*}E_{C,\x} = S_{\x, u+l}^{*}(J^{u+l}_l \otimes C^{*}C)S_{\x, u+l}: \H_K \mapto \H_K,
\end{equation}
Equation (\ref{equation:fz1}) becomes
\begin{equation}\label{equation:fz2}
\left[S_{\x, u+l}^{*}(J^{u+l}_l \otimes C^{*}C + l \gamma_I M)S_{\x, u+l} + l \gamma_A I_{\H_K}\right]f_{\z,\gamma} = S_{\x, u+l}^{*}(I_{(u+l) \times l} \otimes C^{*})\y,
\end{equation}
which gives
\begin{equation}
f_{\z,\gamma} = \left[S_{\x, u+l}^{*}(J^{u+l}_l \otimes C^{*}C + l \gamma_I M)S_{\x, u+l} + l \gamma_A I_{\H_K}\right]^{-1}S_{\x, u+l}^{*}(I_{(u+l) \times l} \otimes C^{*})\y.
\end{equation}

For any $x \in \X$,
$$
(S_{\x, u+l}^{*}(I_{(u+l) \times l} \otimes C^{*})\y)(x) = \sum_{i=1}^lK(x,x_i)(C^{*}y_i) \in \W.
$$
Using the feature map $\Phi_K$, we have for any $x \in \X$,
$$
(S_{\x, u+l}^{*}(I_{(u+l) \times l} \otimes C^{*})\y)(x) = \sum_{i=1}^l\Phi_K(x)^{*}\Phi_K(x_i)(C^{*}y_i) \in \W,
$$
and for any $w \in \W$,
\begin{align*}
\la (S_{\x, u+l}^{*}(I_{(u+l) \times l} \otimes C^{*})\y)(x),w\ra_{\W} &= \sum_{i=1}^l\la \Phi_K(x)^{*}\Phi_K(x_i)(C^{*}y_i), w\ra_{\W} \nonumber\\
&= \sum_{i=1}^l \la \Phi_K(x_i)(C^{*}y_i), \Phi_K(x)w\ra_{\F_K} \nonumber\\
&= \la \Phi_K(\x)(I_{(u+l) \times l} \otimes C^{*})\y, \Phi_K(x)w\ra_{\F_K}.
\end{align*}
For any $f \in \H_K$,
$$
S_{\x,u+l}^{*}MS_{\x,u+l}f = \sum_{i=1}^{u+l}K_{x_i}(M\f)_i.
$$
For any $x \in \X$,
$$
(S_{\x,u+l}^{*}MS_{\x,u+l}f)(x) = \sum_{i=1}^{u+l}K(x,x_i)(M\f)_i = \sum_{i=1}^{u+l}\Phi_K(x)^{*}\Phi_K(x_i)(M\f)_i \in \W,
$$
and for any $w \in \W$,
$$
\la (S_{\x,u+l}^{*}MS_{\x,u+l}f)(x), w\ra_{\W} = \sum_{i=1}^{u+l}\la \Phi_K(x_i)(M\f)_i, \Phi_K(x)w\ra_{\F_K}.
$$
We have
$$
\sum_{i=1}^{u+l}\Phi_K(x_i)(M\f)_i = \sum_{i=1}^{u+l}\Phi_K(x_i)(M\Phi_K(\x)^{*}\h)_i = \Phi_K(\x)M\Phi_K(\x)^{*}\h \in \F_K.
$$
It follows that
\begin{equation}
\la (S_{\x,u+l}^{*}MS_{\x,u+l}f)(x), w\ra_{\W} = \la \Phi_K(\x)M\Phi_K(\x)^{*}\h , \Phi_K(x)w\ra_{\F_K}.
\end{equation}
Similarly, for any $f \in \H_K$,
\begin{align*}
S_{\x,u+l}^{*}(J^{u+l}_l \otimes C^{*}C)S_{\x,u+l}f &= S_{\x, u+l}^{*}(J^{u+l}_l \otimes C^{*}C)\f = \sum_{i=1}^{u+l}K_{x_i}((J^{u+l}_l \otimes C^{*}C)\f)_i
\\
&= \sum_{i=1}^{u+l}K_{x_i}((J^{u+l}_l \otimes C^{*}C)\Phi_K(\x)^{*}\h)_i.
\end{align*}
For any $x \in \X$,
\begin{align*}
(S_{\x,u+l}^{*}(J^{u+l}_l \otimes C^{*}C)S_{\x,u+l}f)(x) &= \sum_{i=1}^{u+l}K(x,x_i)((J^{u+l}_l \otimes C^{*}C)\Phi_K(\x)^{*}\h)_i
\\
&= \sum_{i=1}^{u+l}\Phi_K(x)^{*}\Phi_K(x_i)((J^{u+l}_l \otimes C^{*}C)\Phi_K(\x)^{*}\h)_i.
\end{align*}
For any $w \in \W$,
\begin{align*}
\la (S_{\x,u+l}^{*}(J^{u+l}_l \otimes C^{*}C)S_{\x,u+l}f)(x), w\ra_{\W} &= \la \sum_{i=1}^{u+l}\Phi_K(x_i)((J^{u+l}_l \otimes C^{*}C)\Phi_K(\x)^{*}\h)_i, \Phi_K(x)w\ra_{\W}
\\
&= \la \Phi_K(\x)(J^{u+l}_l \otimes C^{*}C)\Phi_K(\x)^{*}\h, \Phi_K(x)w\ra_{\F_K}.
\end{align*}
Equation (\ref{equation:fz2}) is then equivalent to
\begin{align*}
&\la \Phi_K(\x)(J^{u+l}_l \otimes C^{*}C)\Phi_K(\x)^{*}\h, \Phi_K(x)w\ra_{\F_K} + l \gamma_I \la \Phi_K(\x)M\Phi_K(\x)^{*}\h , \Phi_K(x)w\ra_{\F_K}
\\
&+ l \gamma_A \la \h, \Phi_K(x)w\ra_{\F_K} = \la \Phi_K(\x)(I_{(u+l) \times l} \otimes C^{*})\y, \Phi_K(x)w\ra_{\F_K}.
\end{align*}
for all $x \in \X$, $w \in \W$, which is
\begin{eqnarray*}
\la \Phi_K(\x)[(J^{u+l}_l \otimes C^{*}C) + l\gamma_I M]\Phi_K(\x)^{*}\h, \Phi_K(x)w\ra_{\F_K} + l \gamma_A \la \h, \Phi_K(x)w\ra_{\F_K}
\\
= \la \Phi_K(\x)(I_{(u+l) \times l} \otimes C^{*})\y, \Phi_K(x)w\ra_{\F_K}.
\end{eqnarray*}
for all $x \in \X$, $w \in \W$. This is satisfied if
$$
\left(\Phi_K(\x)[(J^{u+l}_l \otimes C^{*}C) + l\gamma_I M]\Phi_K(\x)^{*} + l \gamma_A I_{\F_K}\right)\h = \Phi_K(\x)(I_{(u+l) \times l} \otimes C^{*})\y.
$$
This completes the proof of the theorem.
\end{proof}

\bibliography{cite_RKHS}

\begin{thebibliography}{26}
\providecommand{\natexlab}[1]{#1}
\providecommand{\url}[1]{\texttt{#1}}
\expandafter\ifx\csname urlstyle\endcsname\relax
  \providecommand{\doi}[1]{doi: #1}\else
  \providecommand{\doi}{doi: \begingroup \urlstyle{rm}\Url}\fi

\bibitem[Brault et~al.(2016)Brault, d'Alch{\'e} Buc, and Heinonen]{Romain:2016}
R.~Brault, F.~d'Alch{\'e} Buc, and M.~Heinonen.
\newblock Random fourier features for operator-valued kernels.
\newblock \emph{arXiv preprint arXiv:1605.02536}, 2016.

\bibitem[Brouard et~al.(2011)Brouard, D'Alche-Buc, and
  Szafranski]{ICML2011Brouard}
C.~Brouard, F.~D'Alche-Buc, and M.~Szafranski.
\newblock Semi-supervised penalized output kernel regression for link
  prediction.
\newblock In \emph{Proceedings of the International Conference on Machine
  Learning}, 2011.

\bibitem[Caponnetto et~al.(2008)Caponnetto, Micchelli, Pontil, and
  Ying]{Caponnetto08}
A.~Caponnetto, C.~Micchelli, M.~Pontil, and Y.~Ying.
\newblock Universal multi-task kernels.
\newblock \emph{Journal of Machine Learning Research}, 9:\penalty0 1615--1646,
  2008.

\bibitem[Carmeli et~al.(2006)Carmeli, Vito, and Toigo]{Carmeli2006}
C.~Carmeli, E.~De Vito, and A.~Toigo.
\newblock Vector-valued reproducing kernel {H}ilbert spaces of integrable
  functions and {Me}rcer theorem.
\newblock \emph{Analysis and Applications}, 4:\penalty0 377--408, 2006.

\bibitem[Carmeli et~al.(2010)Carmeli, De~Vito, Toigo, and
  Umanit{\'a}]{Carmeli:2010}
C.~Carmeli, E.~De~Vito, A.~Toigo, and V.~Umanit{\'a}.
\newblock Vector valued reproducing kernel {H}ilbert spaces and universality.
\newblock \emph{Analysis and Applications}, 8\penalty0 (01):\penalty0 19--61,
  2010.

\bibitem[Cucker and Smale(2002)]{CuckerSmale}
F.~Cucker and S.~Smale.
\newblock On the mathematical foundations of learning.
\newblock \emph{Bulletin of the American Mathematical Society}, 39\penalty0
  (1):\penalty0 1--49, January 2002.

\bibitem[Diestel(1984)]{Diestel:Sequences}
J.~Diestel.
\newblock \emph{Sequences and Series in {B}anach Spaces}.
\newblock Springer, 1984.

\bibitem[Dinuzzo et~al.(2011)Dinuzzo, Ong, Gehler, and
  Pillonetto]{ICML2011Dinuzzo}
F.~Dinuzzo, C.S. Ong, P.~Gehler, and G.~Pillonetto.
\newblock Learning output kernels with block coordinate descent.
\newblock In \emph{ICML}, 2011.

\bibitem[Falb(1969)]{Falb:1969}
P.L. Falb.
\newblock On a theorem of {B}ochner.
\newblock \emph{Publications Math{\'e}matiques de l'IH{\'E}S}, 36:\penalty0
  59--67, 1969.

\bibitem[Fuselier(2006)]{Fuselier2006}
E.J. Fuselier.
\newblock \emph{Refined Error Estimates for Matrix-Valued Radial Basis
  Functions}.
\newblock PhD Thesis, Texas AM University, 2006.

\bibitem[Jones(2001)]{Jones:Lebesgue}
F.~Jones.
\newblock \emph{Lebesgue Integration on Euclidean Space}.
\newblock Jones and Bartlett Publishers, revised edition, 2001.

\bibitem[Kadri et~al.(2011)Kadri, Rabaoui, Preux, Duflos, and
  Rakotomamonjy]{Kadrietal2011}
H.~Kadri, A.~Rabaoui, P.~Preux, E.~Duflos, and A.~Rakotomamonjy.
\newblock Functional regularized least squares classification with
  operator-valued kernels.
\newblock In \emph{ICML}, 2011.

\bibitem[Le et~al.(2013)Le, Sarlos, and Smola]{Fastfood:ICML2013}
Q.~Le, T.~Sarlos, and A.~Smola.
\newblock Fastfood - approximating kernel expansions in loglinear time.
\newblock In \emph{Proceedings of the International Conference in Machine
  Learning (ICML)}, 2013.

\bibitem[Micchelli and Pontil(2005)]{MichelliPontil05}
C.~A. Micchelli and M.~Pontil.
\newblock On learning vector-valued functions.
\newblock \emph{Neural Computation}, 17:\penalty0 177--204, 2005.

\bibitem[Minh and Sindhwani(2011)]{MinhVikasICML2011}
H.Q. Minh and V.~Sindhwani.
\newblock Vector-valued manifold regularization.
\newblock In \emph{Proceedings of the International Conference in Machine
  Learning (ICML)}, 2011.

\bibitem[Minh et~al.(2006)Minh, Niyogi, and Yao]{Minh-Niyogi-Yao}
H.Q. Minh, P.~Niyogi, and Y.~Yao.
\newblock Mercer's theorem, feature maps, and smoothing.
\newblock In \emph{Proceedings of 19th Annual Conference on Learning Theory},
  Pittsburg, June 2006. Springer.

\bibitem[Minh et~al.(2016)Minh, Bazzani, and Murino]{Minh:JMLR2016}
H.Q. Minh, L.~Bazzani, and V.~Murino.
\newblock A unifying framework in vector-valued reproducing kernel {H}ilbert
  spaces for manifold regularization and co-regularized multi-view learning.
\newblock \emph{Journal of Machine Learning Research}, 17\penalty0
  (25):\penalty0 1--72, 2016.

\bibitem[Neeb(1998)]{Neeb:Operator1998}
K.H. Neeb.
\newblock Operator-valued positive definite kernels on tubes.
\newblock \emph{Monatshefte f{\"u}r Mathematik}, 126\penalty0 (2):\penalty0
  125--160, 1998.

\bibitem[Rahimi and Recht(2007)]{Fourier:NIPS2007}
A.~Rahimi and B.~Recht.
\newblock Random features for large-scale kernel machines.
\newblock In \emph{NIPS}, 2007.

\bibitem[Reed and Simon(1975)]{ReedSimon:vol2}
M.~Reed and B.~Simon.
\newblock \emph{Methods of Modern Mathematical Physics II: Fourier Analysis,
  Self-Adjointness}.
\newblock Academic Press, 1975.

\bibitem[Reisert and Burkhardt(2007)]{Reisert2007}
M.~Reisert and H.~Burkhardt.
\newblock Learning equivariant functions with matrix valued kernels.
\newblock \emph{J. Mach. Learn. Res.}, 8:\penalty0 385--408, 2007.

\bibitem[Sindhwani et~al.(2013)Sindhwani, Minh, and
  Lozano]{VikasMinhLozano:UAI2013}
V.~Sindhwani, H.Q. Minh, and A.C. Lozano.
\newblock Scalable matrix-valued kernel learning for high-dimensional nonlinear
  multivariate regression and {G}ranger causality.
\newblock In \emph{UAI}, 2013.

\bibitem[Smale and Zhou(2007)]{smalezhou2007}
S.~Smale and D.X. Zhou.
\newblock Learning theory estimates via integral operators and their
  approximations.
\newblock \emph{Constructive Approximation}, 26:\penalty0 153--172, 2007.

\bibitem[Sriperumbudur and Szab{\'o}(2015)]{Fourier:NIPS2015}
B.~Sriperumbudur and Z.~Szab{\'o}.
\newblock Optimal rates for random {F}ourier features.
\newblock In \emph{Advances in Neural Information Processing Systems}, pages
  1144--1152, 2015.

\bibitem[Sutherland and Schneider(2015)]{Fourier:UAI2015}
D.~Sutherland and J.~Schneider.
\newblock On the error of random {F}ourier features.
\newblock In \emph{UAI}, 2015.

\bibitem[Zhang et~al.(2012)Zhang, Xu, and Zhang]{Zhangetal:JMLR2012}
H.~Zhang, Y.~Xu, and Q.~Zhang.
\newblock Refinement of operator-valued reproducing kernels.
\newblock \emph{Journal of Machine Learning Research}, 13:\penalty0 91--136,
  Jan 2012.

\end{thebibliography}

\end{document}